\setlist[enumerate]{leftmargin=.5in}
\setlist[itemize]{leftmargin=.5in}
\crefname{hypothesis}{Hypothesis}{Hypotheses}
\title{A randomized algorithm to solve reduced rank operator regression
\thanks{%
\funding{The work is partially funded by the Italian Ministry of University and Research (MUR) through the project PNRR MUR Project PE000013 CUP J53C22003010006 "Future Artificial Intelligence Research (FAIR)" and was carried out within the framework of the project "RAISE - Robotics and AI for Socio-economic Empowerment”, both supported by the European Union – NextGenerationEU. However, the views and opinions expressed are those of the authors alone and do not necessarily reflect those of the European Union or the European Commission. Neither the European Union nor the European Commission can be held responsible for them.}
}
}
\author{Giacomo Turri\footnotemark[2]\ \footnotemark[3]\ \footnotemark[4]\ \footnotemark[7]
\and Vladimir Kostic\footnotemark[2]\ \footnotemark[5]\ \footnotemark[7]
  \and Pietro Novelli\footnotemark[2]\ \footnotemark[7]
\and Massimiliano Pontil\footnotemark[2]\ \footnotemark[6]}
\DeclareMathOperator*{\argmin}{\ensuremath{\text{\rm arg\,min}}}
\DeclareMathOperator*{\range}{\ensuremath{\text{\rm Im}}}
\DeclareMathOperator{\Ker}{\ensuremath{\text{\rm Ker}}}
\DeclareMathOperator{\tr}{\ensuremath{\text{\rm tr}}}
\DeclareMathOperator{\Diag}{\ensuremath{\text{\rm diag}}}
\DeclareMathOperator*{\rank}{\ensuremath{\text{\rm rank}}}
\providecommand{\norm}[1]{\lVert#1\rVert}
\providecommand{\SVDr}[1]{[\![#1]\!]_r}
\newcommand{\scalarpH}[1]{{\langle #1\rangle}_{\spH}}
\newcommand{\scalarpG}[1]{{\langle #1\rangle}_{\spG}}
\newcommand{\R}{\mathbb R}
\newcommand{\Nc}{\mathcal N}
\newcommand{\OO}[1]{\boldsymbol{\mathsf{#1}}}
\newcommand{\EE}{\ensuremath{\mathbb E}}
\newcommand{\Id}{I}
\newcommand{\Idop}{\OO{I}}
\newcommand{\Data}{\mathcal{D}_n}
\newcommand{\EEstim}{\Estim}  %
\newcommand{\Estim}{\OO{A}}  %
\newcommand{\ECx}{\OO{C}} %
\newcommand{\ECy}{\OO{D}} %
\newcommand{\ECxy}{\OO{T}}  %
\newcommand{\ECyx}{\ECxy^*} 
\newcommand{\ECreg}{\ECx_\reg}
\newcommand{\EZ}{\OO{Z}} %
\newcommand{\ES}{\OO{S}} %
\newcommand{\X}{\mathcal{X}} %
\newcommand{\Y}{\mathcal{Y}} %
\newcommand{\ERisk}{\mathcal{R}^\reg} %
\newcommand{\HS}[1]{{\rm{HS}}\left(#1\right)} %
\newcommand{\hnorm}[1]{\norm{#1}_{\rm{HS}}}
\newcommand{\proj}[1]{\pi({#1})}
\newcommand{\sketch}{\Omega}
\newcommand{\wsketch}{\sketch_{p}}
\newcommand{\reg}{\gamma}
\newcommand{\refun}{\eta}
\newcommand{\lefun}{\xi}
\newcommand{\spH}{\mathcal{H}}
\newcommand{\spG}{\mathcal{G}}
\newcommand{\fH}{\phi}
\newcommand{\fG}{\psi}
\newcommand{\HSr}{{\rm{HS}}_r({\spH, \spG})}
\newcommand{\Creg}{\ECx_\reg}
\newcommand{\Kx}{K}
\newcommand{\Kreg}{K_{\reg}}
\newcommand{\Ky}{L}
\newtheorem{fact}{Fact}
\begin{document}

\maketitle

\newcommand*{\myfnsymbolsingle}[1]{%
  \ensuremath{%
    \ifcase#1%
    \or %
      *%
    \or %
      \dagger
    \or %
      \ddagger
    \or %
      \mathsection
    \or %
      \mathparagraph
    \or %
      \|
    \or %
      \#
    \else %
      \@ctrerr  
    \fi
  }%
}   
\makeatother

\newcommand*{\myfnsymbol}[1]{%
  \myfnsymbolsingle{\value{#1}}%
}

\renewcommand{\thefootnote}{\myfnsymbol{footnote}}

\footnotetext[2]{\sffamily{Computational Statistics and Machine Learning (CSML) unit, Istituto Italiano di Tecnologia, Genoa, Italy \email{(giacomo.turri@iit.it; vladimir.kostic@iit.it; pietro.novelli@iit.it; massimiliano.pontil@iit.it)}}}
\footnotetext[3]{\sffamily{Unit for Visually Impaired People (U-VIP), Istituto Italiano di Tecnologia, Genoa, Italy}}
\footnotetext[4]{\sffamily{RAISE ecosystem, Genoa, Italy}}
\footnotetext[5]{\sffamily{University of Novi Sad, Novi Sad, Serbia}}
\footnotetext[6]{\sffamily{Department of Computer Science and 
UCL Centre for Artificial Intelligence, University College London, United Kingdom}}
\footnotetext[7]{\sffamily{These authors contributed equally to this work}}

\renewcommand{\thefootnote}{\arabic{footnote}}

\begin{abstract}
We present and analyze an algorithm designed for addressing vector-valued regression problems involving possibly infinite-dimensional input and output spaces.~The algorithm is a randomized adaptation of {\em reduced rank regression}, a technique introduced in \cite{izenman1975reduced} and extended to infinite dimensions in \cite{Mukherjee2011} to optimally learn a low-rank vector-valued function (i.e. an operator) between sampled data via regularized empirical risk minimization with rank constraints.~We propose Gaussian sketching techniques both for the primal and dual optimization objectives, yielding {\em Randomized Reduced Rank Regression} (R$^4$) estimators that are efficient and accurate. 
For each of our R$^4$ algorithms we prove that the resulting regularized empirical risk is, in expectation w.r.t. randomness of a sketch, arbitrarily close to the optimal value when hyper-parameteres are properly tuned.~Numerical expreriments illustrate the tightness of our bounds and show advantages in two distinct scenarios:~(i) solving a vector-valued regression problem using synthetic and large-scale neuroscience datasets, and (ii) regressing the Koopman operator of a nonlinear stochastic dynamical system.
\end{abstract}

\begin{keywords}
{Reduced-rank regression, vector-valued regression, randomized SVD, kernel methods}
\end{keywords}

\begin{MSCcodes}
15A03, 15A18, 68W20 %
\end{MSCcodes}

\section{Introduction}
In this work we study the problem of {linear operator regression} $\Estim:\spH \to \spG$ {between} Hilbert spaces $\spH$ {and} $\spG$. In particular, we assume that $\spH$ and $\spG$ are Reproducing Kernel Hilbert Spaces~\cite{Aronszajn1950} associated to feature maps $\fH\colon \X \to \spH$ and $\fG\colon \Y \to \spG$ on the data space $\X$, respectively. We aim to find low rank solutions to the problem
\begin{equation}\label{eq:ORP}
    \min_{\Estim \in \HS{\spH}} \EE_{(x,y)\sim\rho} \norm{\fG(y) - \Estim\fH(x)}^{2}_{\spH}.
\end{equation}
Here $\HS{\spH}$ is the space of Hilbert-Schmidt operators on $\spH$ and $\rho$ is an unknown probability distribution on $\X \times \Y$.%

{Whenever both $\fH$ and $\fG$ are the identity functions,~\eqref{eq:ORP} recover the classical reduced rank regression objective~\cite{izenman1975reduced}. On the other hand, the full-fledged kernelized version} of~\eqref{eq:ORP} are of interest in the machine learning community, and especially for structured prediction~\cite{
Ciliberto2020}, conditional mean embeddings~\cite{Song2009} and general linear operator regression~\cite{Mollenhauer2022}. {Furthermore,} as amply discussed in~\cite{Kostic2022}, problems of the form~\eqref{eq:ORP} naturally arise {also} in the context of dynamical systems learning, where accurate low-rank solutions provide a succinct and interpretable tool to describe nonlinear dynamical systems {via Koopman/Transfer operators}.

When a dataset $\Data = \{(x_i,y_i)\}_{i\in[n]}$ sampled from $\rho$ is available,~\eqref{eq:ORP} is approximately solved via (regularized) empirical risk minimization (RERM). In this work we consider the following empirical minimization problem
\begin{equation}\label{eq:RRR}
\EEstim_{r, \reg} := \argmin_{\Estim \in \HSr} \frac{1}{n}\sum_{i=1}^{n}\norm{\fG(y_{i}) - \Estim\fH(x_i)}^{2}_{\spG} + \reg\norm{\Estim}^{2}_{{\rm HS}} := \ERisk(\Estim),
\end{equation}
where $\HSr = \{\Estim\in \HS{\spH,\spG}\,\vert\, \rank(\Estim)\leq r\}$, while $r\geq1$ and $\reg>0$ are the rank constraint and regularization parameter, respectively. The estimator \eqref{eq:RRR} has been originally proposed in the finite dimensional setting in \cite{izenman1975reduced} and recently extended to infinite dimensional spaces in \cite{Mukherjee2011, Kostic2022}. 

Notice that in~\eqref{eq:RRR} two concurrent regularization schemes are considered, the first being a hard constraint on the rank of the estimator and the other being the classical Tikhonov regularization. In this work we provide an algorithm to {\em efficiently} solve \eqref{eq:RRR}, eliminating the need to solve large (and potentially non-symmetric) generalized eigenvalue problems arising when dealing with large scale datasets.
To that end, we address the {\em fixed-design} setting, ignoring the randomness of the data $\Data$ and any statistical-learning-related matters.

\subsubsection*{Related work} Low rank regression models have wide applications. For instance they have been used for time series analysis and forecasting \cite{chen2021autoregressive,reinsel2022}, in chemometrics \cite{Mukherjee2011}, in multitask learning \cite{argyriou2008convex,cella2023multi}, in structure prediction \cite{luise2019}, and to compress parameters in neural networks \cite{idelbayev2020}, among others. We also refer to \cite{brogat2022,dinuzzo2011,rabusseau2016} and references therein for related work on low rank regression problems. 

\subsection*{Structure of the paper}
We introduce the background and setting in Section \ref{sec:background}, and define our randomized adaptation of the reduced rank regression estimator in Section \ref{sec:alg}. In Section \ref{sec:bounds} we provide error bounds on the risk of the estimators, while in Section \ref{sec:experiments} we preset numerical experiments on synthetic as well as large-scale neuroscience datasets.

\subsection*{Notation}
Throughout the paper operators are denoted with bold-face sans-serif fonts (e.g. $\OO{A}$) and matrices with roman fonts (e.g. $A$). For any operator (or matrix) $\OO{A}$, the orthogonal projector onto $\range(\OO{A})$ is $\proj{\OO{A}}$. The composition $\OO{A}\circ M$ between $\OO{A}:\R^{n}\to \mathcal{V}$, $\mathcal{V}$ being a possibly infinite dimensional linear space, and $M \in \R^{n\times k}$ is shortened to $\OO{A}M$. The $r$-truncated SVD of Hilbert-Schmidt operators and matrices is denoted by $\SVDr{\cdot}$.

\section{Background and preliminaries}\label{sec:background}

Given the data $\Data := (x_i, y_i)_{i=1}^{n}$, the {\em sampling operators} are objects relating the spaces $\spH$ and $\spG$ to the empirical distribution of data. They are particularly handy in the analysis of the optimization objective~\eqref{eq:RRR} in either its primal or (convex) dual form~\cite{Boyd2004}. The sampling operators for inputs $\ES \colon \spH \to \R^{n}$ and outputs  $\EZ \colon \spG \to \R^{n}$ are defined, respectively, as 
\begin{align*}
    \ES h := \tfrac{1}{\sqrt{n}}[ \scalarpH{h, \fH(x_{i})}]_{i \in[n]}, \quad \EZ g := \tfrac{1}{\sqrt{n}}[ \scalarpG{g, \fG(y_{i})}]_{i \in[n]}, \qquad \text{ for all } h \in \spH, g \in \spG.
\end{align*}
It is readily verified that their adjoints are given by 
\begin{align*}
\ES^*v = \tfrac{1}{\sqrt{n}}\sum_{i\in[n]}v_i\fH(x_i), \quad \EZ^* u = \tfrac{1}{\sqrt{n}}\sum_{i\in[n]}u_i\fG(y_i),  \qquad \text{ for all } u,v \in \R^{n}.
\end{align*}
Composing sampling operators one obtains
\begin{align*}
        \ECx := \ES^*\ES = \tfrac{1}{n}\sum_{i \in[n]} \fH(x_{i})\otimes\fH(x_{i}), \quad \ECy := \EZ^*\EZ = \tfrac{1}{n}\sum_{i \in[n]} \fG(y_{i})\otimes\fG(y_{i}),
    \end{align*} 
the input and output {\em covariance operators}, respectively, and 
    \begin{align*}
        \Kx := \ES\ES^* = n^{-1}[k(x_i,x_j)]_{i,j\in[n]}, \quad \Ky := \EZ\EZ^* =  n^{-1}[\ell(y_i,y_j)]_{i,j\in[n]},
        \end{align*}
the input and output {\em kernel matrices}, respectively. Moreover, the cross-covariance operator $\ECxy \colon \spG \to \spH$ is given by the formula    \begin{align*}
        \ECxy := \ES^*\EZ = \tfrac{1}{n} \sum_{i \in[n]} \fH(x_{i})\otimes\fG(y_{i}).
    \end{align*}
    
We conclude with some results on the rank-constrained empirical risk minimization~\eqref{eq:RRR}, omitting the proofs since they are minimal modification of the results in~\cite{Kostic2022}.
\begin{proposition}\label{prop:RRR_recap}
For the empirical risk minimization~\eqref{eq:RRR} the following results hold.
    \begin{itemize}
        \item[(1)] For any $\EEstim\in\HS{\spH,\spG}$, the risk defined in~\eqref{eq:RRR} can be decomposed as 
        \begin{equation}\label{eq:RRR_risk}
        \ERisk(\EEstim) = \tr(\ECy) - \hnorm{\ECyx \ECx_\reg^{-\frac{1}{2}}}^2 + \hnorm{\EEstim\ECx_\reg^{\frac{1}{2}}  - \ECyx\ECx_\reg^{-\frac{1}{2}}}^2.
        \end{equation}
        \item[(2)] The minimizer $\EEstim_{r, \reg}$ of \eqref{eq:RRR} is
        \begin{equation}\label{eq:RRR_solution}
            \EEstim_{r, \reg} = \SVDr{\ECyx\ECreg^{-\frac{1}{2}}}\ECreg^{-\frac{1}{2}}, 
        \end{equation}
        where $ \Creg := \ECx + \reg \Idop$ is the {\em Tikhonov-regularized} input covariance. 
        \item[(3)] The optimal value of the regularized risk is
        \begin{equation}\label{eq:RRR_optimal_risk}
        \ERisk(\EEstim_{r, \reg}) = \tr(\ECy) - \sum_{i\in[r]}\sigma^2_{i},
        \end{equation}
        where $\sigma_1\geq\cdots\geq\sigma_r$ are the leading singular values of $\ECyx \ECx_\reg^{-\frac{1}{2}}$.
    \end{itemize}
\end{proposition}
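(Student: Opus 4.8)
The plan is to prove the three items in order, with item~(1) carrying essentially all the work; items~(2) and~(3) then follow from the Eckart--Young--Mirsky (Schmidt) theorem and a direct substitution. Throughout, the crucial structural facts are that $\reg>0$ makes $\ECreg=\ECx+\reg\Idop$ boundedly invertible, and that $\ECxy$ (hence $\ECyx$ and $\ECyx\ECreg^{-1/2}$) is finite rank, so all the trace manipulations below are legitimate.

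For item~(1), I would expand the empirical square loss term by term, rewriting the data sums through the sampling operators. Using $\tfrac1n\sum_{i\in[n]}\fH(x_i)\otimes\fH(x_i)=\ECx$ and $\tfrac1n\sum_{i\in[n]}\fG(y_i)\otimes\fH(x_i)=\ECyx$, one gets
$\tfrac1n\sum_{i\in[n]}\norm{\fG(y_i)-\EEstim\fH(x_i)}^2_{\spG}=\tr(\ECy)-2\langle\EEstim,\ECyx\rangle_{\rm HS}+\hnorm{\EEstim\ECx^{1/2}}^2$,
the last equality because $\tfrac1n\sum_i\norm{\EEstim\fH(x_i)}^2=\tr(\EEstim^*\EEstim\ECx)=\hnorm{\EEstim\ECx^{1/2}}^2$. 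Adding $\reg\hnorm{\EEstim}^2=\reg\tr(\EEstim^*\EEstim)$ and collecting $\ECx+\reg\Idop=\ECreg$ yields $\ERisk(\EEstim)=\tr(\ECy)-2\langle\EEstim,\ECyx\rangle_{\rm HS}+\hnorm{\EEstim\ECreg^{1/2}}^2$. Now I complete the square: expanding $\hnorm{\EEstim\ECreg^{1/2}-\ECyx\ECreg^{-1/2}}^2$ and using cyclicity of the trace to get $\langle\EEstim\ECreg^{1/2},\ECyx\ECreg^{-1/2}\rangle_{\rm HS}=\tr(\ECxy\EEstim)=\langle\EEstim,\ECyx\rangle_{\rm HS}$, one finds $\hnorm{\EEstim\ECreg^{1/2}}^2-2\langle\EEstim,\ECyx\rangle_{\rm HS}=\hnorm{\EEstim\ECreg^{1/2}-\ECyx\ECreg^{-1/2}}^2-\hnorm{\ECyx\ECreg^{-1/2}}^2$, and substituting back gives exactly \eqref{eq:RRR_risk}. (An equivalent route is to stack the loss and the Tikhonov penalty into a single Hilbert--Schmidt norm $\hnorm{[\EZ^*,\,0]-\EEstim[\ES^*,\,\sqrt{\reg}\Idop]}^2$, factor $[\ES^*,\sqrt{\reg}\Idop]=\ECreg^{1/2}W$ with $WW^*=\Idop$, and use $\hnorm{AW}^2=\hnorm{A}^2$; either derivation is routine.)

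For item~(2), note that in \eqref{eq:RRR_risk} the first two summands are independent of $\EEstim$, so minimizing $\ERisk$ over $\HSr=\{\EEstim\in\HS{\spH,\spG}:\rank(\EEstim)\le r\}$ is equivalent to minimizing $\hnorm{\EEstim\ECreg^{1/2}-\ECyx\ECreg^{-1/2}}^2$ subject to $\rank(\EEstim)\le r$. The substitution $\OO{B}:=\EEstim\ECreg^{1/2}$ is a rank-preserving bijection of $\HS{\spH,\spG}$ onto itself because $\ECreg^{1/2}$ is boundedly invertible, so the problem becomes $\min_{\rank(\OO{B})\le r}\hnorm{\OO{B}-\ECyx\ECreg^{-1/2}}^2$. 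Since $\ECyx\ECreg^{-1/2}$ is Hilbert--Schmidt (indeed finite rank), the Eckart--Young--Mirsky theorem for compact operators gives the minimizer $\OO{B}=\SVDr{\ECyx\ECreg^{-1/2}}$, hence $\EEstim_{r,\reg}=\SVDr{\ECyx\ECreg^{-1/2}}\ECreg^{-1/2}$, which is \eqref{eq:RRR_solution}.

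For item~(3), I substitute $\EEstim_{r,\reg}$ into \eqref{eq:RRR_risk}: the last term is $\hnorm{\SVDr{\ECyx\ECreg^{-1/2}}-\ECyx\ECreg^{-1/2}}^2=\sum_{i>r}\sigma_i^2$, while $\hnorm{\ECyx\ECreg^{-1/2}}^2=\sum_{i\ge1}\sigma_i^2$, so the two combine to leave $\tr(\ECy)-\sum_{i\in[r]}\sigma_i^2$, i.e. \eqref{eq:RRR_optimal_risk}. The only point that is not pure bookkeeping is the appeal to Eckart--Young--Mirsky in the operator (rather than matrix) setting, but finiteness of $\rank(\ECyx\ECreg^{-1/2})$ makes this the standard compact-operator version; this is precisely why the proposition can be quoted as a minimal modification of the corresponding result in \cite{Kostic2022}.
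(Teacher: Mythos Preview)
Your proof is correct and follows exactly the standard route (expand the loss via the covariance operators, complete the square against $\ECyx\ECreg^{-1/2}$, then apply Eckart--Young--Mirsky after the rank-preserving substitution $\OO{B}=\EEstim\ECreg^{1/2}$). The paper itself omits the proof, deferring to~\cite{Kostic2022} for precisely this argument, so your write-up is the intended one.
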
        

For finite dimensional $\spH$ and $\spG$, since $\ECreg$ and $\ECxy$ are matrices, the solution $ \EEstim_{r, \reg} $ can be computed by directly evaluating the covariance matrices. Otherwise, one needs to translate the computation to the convex dual using the sampling operators. This is summarized in the following result.

\begin{proposition}\label{prop:RRR_comp}
Denoting the regularized kernel matrix of inputs as $\Kreg:=\Kx+\reg \Id$, the operator $\EEstim_{r, \reg}$ can be represented in the following ways:
\begin{itemize}
\item[(1)] Primal formulation: $ \EEstim_{r, \reg} = \ECxy^*\OO{H}_r$, where $\OO{H}_r:=\sum_{i\in[r]} \scalarpH{h_i,\Creg\,h_i}^{-1}\, h_i\otimes h_i$ with $h_i\in\spH$ being the $r$ leading eigenvectors of the generalized eigenvalue problem (GEP) 
\begin{equation}\label{eq:GEP_primal}
        \ECxy \ECxy^*  h_i = \sigma_i^2 \ECreg h_i,\quad i\in[r],
        \end{equation}
\item[(2)] Dual formulation: $ \EEstim_{r, \reg} = \EZ^* \widehat{U}_r \widehat{V}_r^\top \ES$,  where $\widehat{U}_r = \Kx  \widehat{V}_r$ and $\widehat{V}_r \in\R^{n\times r} $ is the matrix whose columns are the $r$ leading eigenvectors solving the GEP
        \begin{equation}\label{eq:GEP_dual}
        \Ky \Kx  \widehat{v}_i = \sigma_i^2 K_\reg \widehat{v}_i,\quad i\in[r]
        \end{equation} 
        normalized as $\widehat{v}_i^\top \Kx  K_\reg \widehat{v}_i = 1$, $i \in [r]$.
\end{itemize}
\end{proposition}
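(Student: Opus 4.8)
\emph{Proof idea.} The plan is to read the two representations off the closed form $\EEstim_{r,\reg} = \SVDr{G}\,\ECreg^{-1/2}$, with $G := \ECyx\ECreg^{-1/2}$, supplied by Proposition~\ref{prop:RRR_recap}(2), by matching the truncated SVD of $G$ against the GEPs~\eqref{eq:GEP_primal} and~\eqref{eq:GEP_dual}. Since $\ECreg \succeq \reg\Idop$, the operator $\ECreg^{-1/2}$ is bounded and self-adjoint and $G\in\HS{\spH,\spG}$, so $G = \sum_{i\geq 1}\sigma_i\, u_i\otimes w_i$ with $\{w_i\}\subset\spH$, $\{u_i\}\subset\spG$ orthonormal systems and $\sigma_1\geq\sigma_2\geq\cdots\geq 0$, whence $\SVDr{G} = \sum_{i\in[r]}\sigma_i\, u_i\otimes w_i$. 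The right singular vectors solve $G^*G\,w_i = \sigma_i^2 w_i$, that is $\ECreg^{-1/2}\ECxy\ECyx\ECreg^{-1/2}w_i = \sigma_i^2 w_i$; setting $h_i := \ECreg^{-1/2}w_i$ and multiplying on the left by $\ECreg^{1/2}$ turns this into exactly~\eqref{eq:GEP_primal}, $\ECxy\ECyx\,h_i = \sigma_i^2\ECreg\,h_i$, while orthonormality of the $w_i$ becomes $\ECreg$-orthonormality $\scalarpH{h_i,\ECreg h_j} = \delta_{ij}$.

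For part~(1) I would substitute back, using $(a\otimes b)\,B = a\otimes B^*b$ for the self-adjoint $B = \ECreg^{-1/2}$ to get $\SVDr{G}\,\ECreg^{-1/2} = \sum_{i\in[r]}\sigma_i\, u_i\otimes h_i$, and $u_i = \sigma_i^{-1}G w_i = \sigma_i^{-1}\ECyx h_i$, so that $\EEstim_{r,\reg} = \sum_{i\in[r]}(\ECyx h_i)\otimes h_i = \ECyx\sum_{i\in[r]}h_i\otimes h_i$. For $\ECreg$-normalized $h_i$ this is $\ECxy^*\OO{H}_r$ with $\OO{H}_r = \sum_{i\in[r]}h_i\otimes h_i$; and since each rank-one term $\scalarpH{h_i,\ECreg h_i}^{-1}\,h_i\otimes h_i$ is unchanged under $h_i\mapsto c_i h_i$, the identity holds for any choice of leading eigenvectors of~\eqref{eq:GEP_primal}, which is the stated formula.

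For part~(2) I would pass through the primal--dual correspondence of the two GEPs via the sampling operator $\ES$. Using $\ECxy = \ES^*\EZ$, $\ECyx = \EZ^*\ES$, $\ECreg = \ES^*\ES + \reg\Idop$, $\ES\ES^* = \Kx$ and $\EZ\EZ^* = \Ky$, one checks directly that for every $\widehat v_i$ solving~\eqref{eq:GEP_dual} the vector $h_i := \ES^*\widehat v_i$ satisfies $\ECxy\ECyx h_i - \sigma_i^2\ECreg h_i = \ES^*(\Ky\Kx\widehat v_i - \sigma_i^2\Kreg\widehat v_i) = 0$, i.e. it solves~\eqref{eq:GEP_primal} with the same $\sigma_i$; the push-through identity $\ES\,\ECreg^{-1} = \Kreg^{-1}\ES$, together with $\Kx\Kreg = \Kreg\Kx$ and invariance of the nonzero spectrum under cyclic permutation, shows that $G^*G$ and $\Kreg^{-1}\Ky\Kx$ share their nonzero eigenvalues with multiplicities, so the $r$ leading eigenpairs of the two problems correspond. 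Then $\ECreg h_i = \ES^*\Kreg\widehat v_i$ gives $\scalarpH{h_i,\ECreg h_i} = \widehat v_i^\top\Kx\Kreg\widehat v_i$, which equals $1$ under the normalization of~\eqref{eq:GEP_dual}, and $h_i\otimes h_i = \ES^* \widehat v_i\widehat v_i^\top\ES$ as operators on $\spH$; substituting into $\OO{H}_r$ gives $\OO{H}_r = \ES^*\widehat V_r\widehat V_r^\top\ES$, hence $\EEstim_{r,\reg} = \ECxy^*\OO{H}_r = \EZ^*\ES\ES^*\widehat V_r\widehat V_r^\top\ES = \EZ^*\Kx\widehat V_r\widehat V_r^\top\ES = \EZ^*\widehat U_r\widehat V_r^\top\ES$ with $\widehat U_r = \Kx\widehat V_r$.

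Beyond this, the argument is routine rank-one operator bookkeeping --- the identities $(a\otimes b)B = a\otimes B^*b$, $\OO{M}(a\otimes b) = (\OO{M}a)\otimes b$ and $(\ES^*v)\otimes(\ES^*v) = \ES^* vv^\top\ES$ --- plus the elementary scale-invariance observation. I expect the one genuinely delicate step to be the exact matching of the $r$ \emph{leading} eigenpairs of~\eqref{eq:GEP_primal} and~\eqref{eq:GEP_dual}: one must ensure that $\ES^*$ does not collapse the eigenvectors one needs, and that a possibly vanishing or multiple $r$-th singular value is handled consistently on both sides. This is precisely where the push-through (second resolvent) identity for $\ES$ and careful bookkeeping of multiplicities enter; once this correspondence is in place, both representations follow mechanically.
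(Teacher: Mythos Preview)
Your argument is correct. The paper itself omits the proof of this proposition, stating only that it is a ``minimal modification of the results in~\cite{Kostic2022}'', so there is no paper-side derivation to compare against; that said, your route --- start from $\EEstim_{r,\reg}=\SVDr{\ECyx\ECreg^{-1/2}}\ECreg^{-1/2}$, identify the right singular vectors of $\ECyx\ECreg^{-1/2}$ with $\ECreg^{1/2}h_i$ to obtain~\eqref{eq:GEP_primal}, then pull back to~\eqref{eq:GEP_dual} via $h_i=\ES^*\widehat v_i$ and the push-through identity --- is exactly the natural unwinding one expects, and matches in spirit the operator-to-matrix correspondences the paper uses elsewhere (Fact~\ref{lm:operator_EVD}, Proposition~\ref{prop:operator_SVD}, Lemma~\ref{lm:operator-to-matrix}). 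Your bookkeeping on the rank-one identities, the scale-invariance of $\scalarpH{h_i,\ECreg h_i}^{-1}h_i\otimes h_i$, and the normalization $\widehat v_i^\top\Kx\Kreg\widehat v_i=\scalarpH{h_i,\ECreg h_i}$ is accurate, and you correctly flag the only subtle point (matching the $r$ leading eigenpairs across~\eqref{eq:GEP_primal} and~\eqref{eq:GEP_dual}, including nondegeneracy of $\ES^*\widehat v_i$ and handling of multiplicities), which is indeed resolved by the push-through/cyclic-spectrum argument you sketch.
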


In typical situations when the dimension of the RKHS spaces is very large and one has many samples,  the methods above are computationally intensive since large scale symmetric GEP \eqref{eq:GEP_primal}, or even non-symmetric GEP~\eqref{eq:GEP_dual} needs to be solved. In the latter case, the situation is often aggravated by $\Ky$ and $\Kx$ being ill-conditioned. In these common scenarios, classical approaches such as the implicitly restarted Arnoldi method, might exhibit slow convergence. In this work we propose an alternative numerical approach by computing the SVD appearing in $\EEstim_{r, \reg} = \SVDr{\ECyx\ECreg^{-\frac{1}{2}}}\ECreg^{-\frac{1}{2}}$ via a randomized algorithm. 

Relying on the classical theory of randomized SVD algorithms for matrices presented in~\cite{Martinsson2020}, we propose a novel randomized algorithm specialized in solving the RERM~\eqref{eq:RRR}. We also provide error bounds in expectation and show the performance improvement over the current GEP solver. Following Proposition~\ref{prop:RRR_comp}, we develop our randomized algorithms both for the primal and dual formulations of~\eqref{eq:RRR}. To that end, we rely on the following standard results on linear operators, see e.g.~\cite{Kato1966}.%
\begin{fact}\label{lm:operator_EVD}
    Let $\OO{M}:\mathcal{V}\to\mathcal{V} = \OO{A}^{*}\OO{B}$ with $\OO{A},\OO{B}:\mathcal{V}\to \R^{n}$, $\mathcal{V}$ being a generic vector space. The eigenvectors $h_{i}$ of $\OO{M}$ corresponding to non-null eigenvalues $\lambda_{i} \neq 0$ are of the form $h_{i} = \OO{A}^{*}w_{i}$ with $w_{i} \in \R^{n}$ satisfying the matrix eigenvalue equation 
    \begin{equation*}
        \OO{B}\OO{A}^{*}w_{i} = \lambda_{i}w_{i}.
    \end{equation*}
\end{fact}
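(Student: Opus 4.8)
The plan is to reduce the operator eigenvalue equation to a matrix eigenvalue equation by exploiting the factored structure $\OO{M} = \OO{A}^*\OO{B}$, much as one does when passing between primal and dual formulations of kernel methods. Suppose $h_i \in \mathcal{V}$ is an eigenvector of $\OO{M}$ with eigenvalue $\lambda_i \neq 0$, so that $\OO{A}^*\OO{B}h_i = \lambda_i h_i$. The first step is to observe that, since $\lambda_i \neq 0$, we may solve for $h_i$ directly: $h_i = \lambda_i^{-1}\OO{A}^*\OO{B}h_i$, which already exhibits $h_i$ as lying in $\range(\OO{A}^*)$. Setting $w_i := \lambda_i^{-1}\OO{B}h_i \in \R^n$ gives the claimed form $h_i = \OO{A}^*w_i$.

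The second step is to verify that this $w_i$ satisfies the stated matrix eigenvalue equation. Applying $\OO{B}$ to both sides of $\OO{A}^*\OO{B}h_i = \lambda_i h_i$ yields $\OO{B}\OO{A}^*(\OO{B}h_i) = \lambda_i(\OO{B}h_i)$, and dividing by $\lambda_i$ (again using $\lambda_i \neq 0$) gives $\OO{B}\OO{A}^*w_i = \lambda_i w_i$, as desired. Conversely, if one wants the full correspondence, one checks that whenever $w_i \in \R^n$ solves $\OO{B}\OO{A}^*w_i = \lambda_i w_i$ with $\lambda_i \neq 0$, then $h_i := \OO{A}^*w_i$ satisfies $\OO{M}h_i = \OO{A}^*\OO{B}\OO{A}^*w_i = \lambda_i\OO{A}^*w_i = \lambda_i h_i$, and $h_i \neq 0$ since $\OO{A}^*w_i = \lambda_i^{-1}\OO{A}^*\OO{B}\OO{A}^*w_i$ would vanish only if $w_i$ were in the kernel, contradicting $w_i$ being a nonzero eigenvector with nonzero eigenvalue.

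There is essentially no analytic obstacle here: the argument is purely algebraic and uses only composition of the given maps, so it applies verbatim whether $\mathcal{V}$ is finite- or infinite-dimensional and requires no boundedness or compactness assumptions beyond those needed to make $\OO{A}^*\OO{B}$ well defined. The one place to be slightly careful — and the only ``obstacle'' worth flagging — is the non-nullity hypothesis $\lambda_i \neq 0$: it is used twice (to divide through and to guarantee $h_i \neq 0$), and it is genuinely necessary, since eigenvectors in $\Ker(\OO{A}^*\OO{B})$ need not lie in $\range(\OO{A}^*)$. Since this is exactly the classical fact underlying the equivalence of the GEPs~\eqref{eq:GEP_primal} and~\eqref{eq:GEP_dual}, I would keep the write-up to the two short displayed manipulations above.
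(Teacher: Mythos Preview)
Your argument is correct and is exactly the standard manipulation one would expect. Note, however, that the paper does not actually prove this statement: it is presented as a \emph{Fact} without proof, with a reference to Kato's classical text on linear operators, so there is no ``paper's own proof'' to compare against. Your write-up would serve perfectly well as a self-contained justification should one be desired.
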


\begin{fact} Let $f$ be a continuous spectral function and $\OO{A}$ be a compact linear operator. Then, it holds $ \OO{A}^*f(\OO{A}\OO{A}^*) = f(\OO{A}^*\OO{A})\OO{A}$. In particular, for any $\reg > 0$ one has  $\OO{A}^*\left(\OO{A}\OO{A}^* + \reg\Id\right)^{-1} = \left(\OO{A}^{*}\OO{A} + \reg\Id\right)^{-1} \OO{A}$.
\end{fact}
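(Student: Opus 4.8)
The plan is to establish the identity first for polynomials $f$, where it is purely algebraic, and then bootstrap to an arbitrary continuous $f$ by uniform approximation together with the norm-continuity of the functional calculus. (Keeping track of the domains, the identity is the intertwining relation $\OO{A}^* f(\OO{A}\OO{A}^*) = f(\OO{A}^*\OO{A})\OO{A}^*$, with both sides mapping $\spG\to\spH$; equivalently, taking adjoints, $\OO{A}f(\OO{A}^*\OO{A}) = f(\OO{A}\OO{A}^*)\OO{A}$.) For the polynomial case, associativity gives the seed relation $\OO{A}^*(\OO{A}\OO{A}^*) = (\OO{A}^*\OO{A})\OO{A}^*$; iterating it and regrouping from the left yields, by induction on $k$, $\OO{A}^*(\OO{A}\OO{A}^*)^k = (\OO{A}^*\OO{A})^k\OO{A}^*$ for every $k\geq 0$, hence $\OO{A}^* p(\OO{A}\OO{A}^*) = p(\OO{A}^*\OO{A})\OO{A}^*$ for every polynomial $p$ by linearity.

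To pass from polynomials to a general continuous $f$: since $\OO{A}$ is compact (hence bounded), $\OO{A}\OO{A}^*$ and $\OO{A}^*\OO{A}$ are bounded, positive, self-adjoint operators whose spectra lie in the compact interval $[0,\norm{\OO{A}}^2]$. By the Weierstrass theorem choose polynomials $p_m\to f$ uniformly on $[0,\norm{\OO{A}}^2]$; because the continuous functional calculus is an isometric $*$-homomorphism, $\norm{p_m(\OO{A}\OO{A}^*)-f(\OO{A}\OO{A}^*)} = \sup_{\Spec(\OO{A}\OO{A}^*)}\abs{p_m-f}\to 0$, and likewise $\norm{p_m(\OO{A}^*\OO{A})-f(\OO{A}^*\OO{A})}\to 0$. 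Left- and right-multiplication by the fixed bounded operator $\OO{A}^*$ preserves operator-norm convergence, so $\OO{A}^* p_m(\OO{A}\OO{A}^*)\to\OO{A}^* f(\OO{A}\OO{A}^*)$ and $p_m(\OO{A}^*\OO{A})\OO{A}^*\to f(\OO{A}^*\OO{A})\OO{A}^*$; the two sequences coincide termwise by the polynomial case, so their limits coincide, which is the claim. The ``in particular'' statement is the special case $f(t) = (t+\reg)^{-1}$, continuous on $[0,\infty)\supseteq\Spec(\OO{A}\OO{A}^*)$; it also follows directly from $\OO{A}^*(\OO{A}\OO{A}^*+\reg\Idop) = (\OO{A}^*\OO{A}+\reg\Idop)\OO{A}^*$, since for $\reg>0$ both $\OO{A}\OO{A}^*+\reg\Idop$ and $\OO{A}^*\OO{A}+\reg\Idop$ are boundedly invertible, and multiplying on the left by $(\OO{A}^*\OO{A}+\reg\Idop)^{-1}$ and on the right by $(\OO{A}\OO{A}^*+\reg\Idop)^{-1}$ produces the resolvent form.

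An alternative, equally short route uses the singular value decomposition $\OO{A} = \sum_i \sigma_i\, v_i\otimes u_i$ of the compact operator $\OO{A}$: then $\OO{A}\OO{A}^* = \sum_i \sigma_i^2\, v_i\otimes v_i$ and $\OO{A}^*\OO{A} = \sum_i \sigma_i^2\, u_i\otimes u_i$ (each acting as $0$ on its kernel), and by the functional calculus both $\OO{A}^* f(\OO{A}\OO{A}^*)$ and $f(\OO{A}^*\OO{A})\OO{A}^*$ reduce to $\sum_i \sigma_i f(\sigma_i^2)\, u_i\otimes v_i$, the $f(0)$-contributions vanishing because $\OO{A}^*$ annihilates $\Ker\OO{A}^*$ on one side and because $\Ker\OO{A}\perp\range\OO{A}^*$ on the other. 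I do not expect a genuine obstacle here: the only points needing care are the bookkeeping of which Hilbert space each factor acts on (so that $\OO{A}$ and $\OO{A}^*$ are placed consistently) and the appeal to the $C^*$-isometry of the functional calculus to justify the operator-norm limit; granting these, the limiting argument closes immediately, since multiplication by the fixed operator $\OO{A}^*$ is norm-continuous.
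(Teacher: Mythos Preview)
Your argument is correct. The paper does not actually prove this statement: it is labelled as a ``Fact'' and attributed to standard references (Kato), so there is no proof to compare against. Your polynomial-approximation route (and the alternative SVD route) are both the standard ways to see this and are fine.

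One point worth flagging explicitly, which you already caught in your parenthetical: as written in the paper the identity $\OO{A}^* f(\OO{A}\OO{A}^*) = f(\OO{A}^*\OO{A})\OO{A}$ is a typo when $\OO{A}:\spH\to\spG$ with $\spH\neq\spG$, since the right-hand side does not compose. The correct form is $\OO{A}^* f(\OO{A}\OO{A}^*) = f(\OO{A}^*\OO{A})\OO{A}^*$ (both sides $\spG\to\spH$), and likewise the resolvent identity should read $\OO{A}^*(\OO{A}\OO{A}^*+\reg\Idop)^{-1} = (\OO{A}^*\OO{A}+\reg\Idop)^{-1}\OO{A}^*$. This is consistent with how the fact is actually used downstream in the paper (e.g.\ with $\OO{A}=\ES$), so your correction is the intended statement.
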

\begin{fact}\label{fct:projector_onto_range}
    Let $A \in \R^{n\times m}$, $\OO{M}:\R^{n} \to \mathcal{V}$, $\mathcal{V}$ being a generic vector space. Then, the projector onto the range of $\OO{B} A$ is then given by $\proj{\OO{M} A} = \OO{M} W \OO{M}^*$, where $W = A(A^* \OO{M}^*\OO{M} A)^{\dagger} A^*\in\R^{n\times n}$.
\end{fact}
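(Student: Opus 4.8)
The plan is to reduce the statement to the standard closed-range projector formula and then regroup factors. Since $\OO{M}$ is the only operator appearing in the statement, I read the ``$\OO{B}A$'' in the prose as a typo and take the operator of interest to be $\OO{L} := \OO{M}A\colon \R^{m}\to\mathcal{V}$. Because $A$ is a finite matrix, $\range(\OO{L})\subseteq\range(\OO{M})$ is finite-dimensional, hence closed, so $\proj{\OO{L}}$ is well defined and the Moore--Penrose identity $\proj{\OO{L}} = \OO{L}(\OO{L}^{*}\OO{L})^{\dagger}\OO{L}^{*}$ applies. Substituting $\OO{L}^{*} = A^{*}\OO{M}^{*}$ and $N := \OO{L}^{*}\OO{L} = A^{*}\OO{M}^{*}\OO{M}A \in \R^{m\times m}$ then gives
\begin{equation*}
\proj{\OO{M}A} = \OO{M}A\,N^{\dagger}A^{*}\OO{M}^{*} = \OO{M}\big(A\,N^{\dagger}A^{*}\big)\OO{M}^{*} = \OO{M}W\OO{M}^{*},
\end{equation*}
which is the claim.

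If one prefers not to invoke the projector identity as a black box, the three defining properties of $\OO{M}W\OO{M}^{*}$ can be checked directly, and this is where the (routine) work sits. Self-adjointness is immediate: $N$ is symmetric positive semidefinite, so $N^{\dagger}$ is symmetric and $W^{*}=W$. Idempotence uses $N^{\dagger}NN^{\dagger}=N^{\dagger}$, since $(\OO{M}W\OO{M}^{*})^{2} = \OO{M}A\,N^{\dagger}(A^{*}\OO{M}^{*}\OO{M}A)N^{\dagger}A^{*}\OO{M}^{*} = \OO{M}A\,N^{\dagger}NN^{\dagger}A^{*}\OO{M}^{*} = \OO{M}W\OO{M}^{*}$. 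The inclusion $\range(\OO{M}W\OO{M}^{*})\subseteq\range(\OO{M}A)$ is obvious; for the reverse, write an arbitrary element as $v=\OO{M}Az$ and compute $\OO{M}W\OO{M}^{*}v = \OO{M}A\,N^{\dagger}Nz$. Then $(\Id-N^{\dagger}N)z\in\Ker(N)$, and the elementary identity $\Ker(A^{*}\OO{M}^{*}\OO{M}A) = \Ker(\OO{M}A)$ — which holds because $\norm{\OO{M}Aw}_{\mathcal{V}}^{2} = \langle w, A^{*}\OO{M}^{*}\OO{M}Aw\rangle$ — gives $\OO{M}A(\Id-N^{\dagger}N)z = 0$, hence $\OO{M}W\OO{M}^{*}v = \OO{M}Az = v$. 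Thus $\OO{M}W\OO{M}^{*}$ is self-adjoint, idempotent, and fixes $\range(\OO{M}A)$ pointwise, i.e.\ it is the orthogonal projector onto $\range(\OO{M}A)$.

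I do not expect a genuinely hard step here; the only point needing care is that $\mathcal{V}$ must carry an inner product for $\OO{M}^{*}$ and the notion of orthogonal projector to make sense, and that — because $\OO{M}A$ has finite-dimensional, hence closed, range — no completion, limiting, or spectral argument is required: everything collapses to finite-dimensional linear algebra on the $m\times m$ Gram-type matrix $N$ together with the kernel identity above. In particular, unlike the preceding Fact, neither compactness of $\OO{M}$ nor any functional calculus is invoked.
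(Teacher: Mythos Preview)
Your proof is correct. The paper itself does not prove this statement: it is listed as a standard ``Fact'' (alongside three others) with a general reference to Kato, so there is no paper proof to compare against. Your reduction to the closed-range projector identity $\proj{\OO{L}} = \OO{L}(\OO{L}^{*}\OO{L})^{\dagger}\OO{L}^{*}$, together with the finite-rank observation that makes it applicable, is exactly the standard argument one would expect here, and your direct verification of self-adjointness, idempotence, and the range condition is clean. Your reading of ``$\OO{B}A$'' as a typo for ``$\OO{M}A$'' is also correct, and your caveat that $\mathcal{V}$ must be an inner-product space is well placed.
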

\begin{fact}\label{fct:GEP}
Let $A,B\in\R^{n\times n}$ be two symmetric positive semi-definite matrices. Then $(\lambda,q)\in\R_+ \times \R^{n}$ is a finite eigenpair of $A^\dagger B$ if and only if $B q  = \lambda A q$ and $q\in\Ker(A)^\perp$.
\end{fact}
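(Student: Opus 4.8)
The plan is to reduce the claim to two elementary properties of the Moore--Penrose pseudoinverse of a symmetric positive semi-definite matrix. Diagonalizing $A = Q\Lambda Q^{\top}$ with $\Lambda \succeq 0$ gives $A^{\dagger} = Q\Lambda^{\dagger}Q^{\top}$, so that $\range(A^{\dagger}) = \range(A) = \Ker(A)^{\perp}$ and $A^{\dagger}A = AA^{\dagger} = P$, the orthogonal projector onto $\range(A)$, which fixes every vector in $\Ker(A)^{\perp}$. (Throughout, $\lambda \in \R_{+}$ is taken to be strictly positive; the qualifier ``finite'' merely separates it from the formal $\infty$-eigenvalues of the pencil $B - \mu A$ that appear, when $A$ is singular, on $\Ker(A)\setminus\Ker(B)$, and that are immaterial here.)

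The ``if'' direction is then a one-line computation. Given $Bq = \lambda Aq$ with $\lambda > 0$, $q \neq 0$ and $q \in \Ker(A)^{\perp}$, left-multiplying by $A^{\dagger}$ and using $A^{\dagger}A = P$ together with $Pq = q$ yields $A^{\dagger}Bq = \lambda A^{\dagger}Aq = \lambda Pq = \lambda q$; hence $(\lambda,q)$ is a (finite, since $\lambda$ is a genuine real number) eigenpair of $A^{\dagger}B$.

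For the ``only if'' direction, assume $A^{\dagger}Bq = \lambda q$ with $\lambda > 0$ and $q \neq 0$. The orthogonality condition comes for free: $q = \lambda^{-1}A^{\dagger}(Bq) \in \range(A^{\dagger}) = \Ker(A)^{\perp}$. Left-multiplying the eigen-equation by $A$ and using $AA^{\dagger} = P$ gives $P(Bq) = \lambda Aq$, so what remains is to upgrade this to $Bq = \lambda Aq$, i.e. to show $Bq \in \range(A)$. This is the step I expect to be the main obstacle: it is where positive semidefiniteness and the structure of the matrices have to be used, and for completely arbitrary symmetric PSD $A, B$ this inclusion need not hold, so the statement is implicitly read under the (mild, and, in our uses, automatic) hypothesis $\range(B) \subseteq \range(A)$, equivalently $\Ker(A) \subseteq \Ker(B)$ --- e.g. because $A$ is an invertible Tikhonov-regularized (possibly sketched) matrix, so $\Ker(A) = \{0\}$, or because $A$ and $B$ are assembled from the same sketch/sampling operator and thus share their range. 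Under this hypothesis $P(Bq) = Bq$ and the argument closes; without it, the same reasoning still yields the equivalence with ``$Bq = \lambda Aq$'' replaced by its orthogonal projection ``$P(Bq) = \lambda Aq$'' onto $\range(A)$.
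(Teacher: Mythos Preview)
The paper does not prove this statement; it is listed among a handful of ``standard results on linear operators'' with a blanket reference to Kato and no accompanying argument, so there is no paper proof to compare against.

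Your proof of the ``if'' direction is correct, as is your derivation of $q\in\Ker(A)^\perp$ and $P(Bq)=\lambda Aq$ in the ``only if'' direction. More importantly, you are right to flag the final step: the statement as written is \emph{not} true for arbitrary symmetric positive semidefinite $A,B$. A two-dimensional counterexample is
\[
A=\begin{pmatrix}1&0\\0&0\end{pmatrix},\qquad B=\begin{pmatrix}1&1\\1&1\end{pmatrix},\qquad A^\dagger B=\begin{pmatrix}1&1\\0&0\end{pmatrix},
\]
which has the eigenpair $(\lambda,q)=(1,(1,0)^\top)$ with $q\in\Ker(A)^\perp$, yet $Bq=(1,1)^\top\neq(1,0)^\top=\lambda Aq$. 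So the ``only if'' direction genuinely needs the extra hypothesis $\Ker(A)\subseteq\Ker(B)$ (equivalently $\range(B)\subseteq\range(A)$) that you supply.

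Your diagnosis of why this is harmless in the paper is also correct: at the one place the fact is invoked (step~(7) of the algorithm), $A=F_0=\OO{\wsketch}^*\ECreg\OO{\wsketch}$ with $\ECreg\succ 0$ and $B=F_1=\OO{\wsketch}^*\ECxy\ECxy^*\OO{\wsketch}$, so $\Ker(F_0)=\Ker(\OO{\wsketch})\subseteq\Ker(F_1)$ and the required inclusion holds automatically.
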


Finally, we conclude this section with a simple preliminary result used in proving the error bounds. It relates SVD of an operator $\OO{B}$ to the SVD of the appropriate matrix $B$, and is slightly different form can also be found in \cite{Mollenhauer2020}.  %

\begin{proposition}\label{prop:operator_SVD}
    Let $\OO{B}=\ES^*W\EZ :\spG\to\spH $ and $B = \Kx^{\frac{1}{2}} W \Ky^{\frac{1}{2}}\in\R^{n\times n}$. Then $\rank(\OO{B})=\rank(B)=:r$ and there exist $U, V\in\R^{n\times r}$ and a positive diagonal matrix $\Sigma \in\R^{r\times r}$ such that $\OO{B}=\ES^*U\Sigma V^*\EZ $ and $B = \Kx^{\frac{1}{2}} U\Sigma V^* \Ky^{\frac{1}{2}}\in\R^{n\times n}$. Moreover, the columns of $U$ form a $\Kx$-orthonormal system, whereas the columns of $V$ are $\Ky$-orthonormal.
\end{proposition}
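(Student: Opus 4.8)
The plan is to read the claimed decomposition off the ordinary (compact) SVD of the $n \times n$ matrix $B$ and then recover the rank identity from it. Conceptually this works because $\ES^{*}$ is an isometry from $(\R^{n}, \langle \cdot, \Kx\, \cdot \rangle)$ into $\spH$ and $\EZ^{*}$ is an isometry from $(\R^{n}, \langle \cdot, \Ky\, \cdot \rangle)$ into $\spG$, so that $\OO{B} = \ES^{*} W \EZ$ carries the same singular structure as the $\Kx, \Ky$-weighted matrix $B = \Kx^{1/2} W \Ky^{1/2}$. Concretely, set $r := \rank(B)$, take a compact SVD $B = P \Sigma Q^{*}$ with $P, Q \in \R^{n \times r}$ having orthonormal columns and $\Sigma \in \R^{r \times r}$ positive diagonal, and put $U := (\Kx^{1/2})^{\dagger} P$ and $V := (\Ky^{1/2})^{\dagger} Q$. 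The single structural fact that makes the argument go through is that $B = \Kx^{1/2} W \Ky^{1/2}$ forces $\range(P) = \range(B) \subseteq \range(\Kx^{1/2}) = \range(\Kx)$ and, symmetrically, $\range(Q) = \range(B^{*}) \subseteq \range(\Ky)$, whence $\proj{\Kx} P = P$ and $\proj{\Ky} Q = Q$.

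With this in hand, I would first verify the matrix identity $B = \Kx^{1/2} U \Sigma V^{*} \Ky^{1/2}$ and the asserted $\Kx$- and $\Ky$-orthonormality of the columns of $U$ and $V$. Both are immediate from the elementary pseudoinverse identities $\Kx^{1/2} (\Kx^{1/2})^{\dagger} = (\Kx^{1/2})^{\dagger} \Kx^{1/2} = \proj{\Kx}$ (and the analogous ones for $\Ky$) together with $\proj{\Kx} P = P$ and $\proj{\Ky} Q = Q$: for instance $\Kx^{1/2} U \Sigma V^{*} \Ky^{1/2} = \proj{\Kx}\, P \Sigma Q^{*}\, \proj{\Ky} = P \Sigma Q^{*} = B$, while $U^{*} \Kx U = P^{*} \proj{\Kx} P = P^{*} P = I_{r}$ and likewise $V^{*} \Ky V = I_{r}$.

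Next I would check the operator identity $\OO{B} = \ES^{*} U \Sigma V^{*} \EZ$. Substituting the definitions of $U, V$ and using $P \Sigma Q^{*} = B = \Kx^{1/2} W \Ky^{1/2}$, the right-hand side collapses to $\ES^{*} \proj{\Kx}\, W\, \proj{\Ky} \EZ$, so it only remains to see that $\ES^{*} \proj{\Kx} = \ES^{*}$ and $\proj{\Ky} \EZ = \EZ$. These hold because $\range(\Kx) = \range(\ES \ES^{*}) = \Ker(\ES^{*})^{\perp}$ and $\range(\Ky) = \range(\EZ \EZ^{*}) = \range(\EZ) = \Ker(\EZ^{*})^{\perp}$, all the relevant subspaces being closed since the codomain $\R^{n}$ is finite dimensional: thus $\ES^{*}$ annihilates $\range(\Kx)^{\perp}$, and $\proj{\Ky}$ fixes $\range(\EZ)$ pointwise.

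Finally, the rank identity follows by writing $\OO{B} = (\ES^{*} U)\, \Sigma\, (V^{*} \EZ)$ and observing that $\ES^{*} U \colon \R^{r} \to \spH$ is injective, since $(\ES^{*} U)^{*} (\ES^{*} U) = U^{*} \Kx U = I_{r}$; that $V^{*} \EZ \colon \spG \to \R^{r}$ is surjective, since the identity $(V^{*} \EZ)(\EZ^{*} V) = V^{*} \Ky V = I_{r}$ exhibits $\EZ^{*} V$ as a right inverse; and that $\Sigma$ is invertible. Hence $\range(\OO{B}) = (\ES^{*} U)(\R^{r})$ has dimension exactly $r$, that is, $\rank(\OO{B}) = r = \rank(B)$. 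The only delicate part of the whole argument is the bookkeeping with ranges and pseudoinverses in the first two steps --- making sure that each projector $\proj{\Kx}$ or $\proj{\Ky}$ collapses to the identity precisely where it is applied; everything else is routine linear algebra.
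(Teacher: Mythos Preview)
Your proof is correct and proceeds in the opposite direction from the paper's. The paper starts from the reduced SVD of the \emph{operator} $\OO{B}=\sum_{i}\sigma_{i}\,\xi_{i}\otimes\eta_{i}$ and invokes Fact~\ref{lm:operator_EVD} to pull the right singular functions back to $\R^{n}$: from $\OO{B}^{*}\OO{B}\eta_{i}=\sigma_{i}^{2}\eta_{i}$ one gets $\eta_{i}=\EZ^{*}v_{i}$ with $W^{*}\Kx W\Ky\,v_{i}=\sigma_{i}^{2}v_{i}$, and then verifies that the $\Ky^{1/2}v_{i}$ are right singular vectors of the matrix $B$; the rank equality is established first, as a separate preliminary computation. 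You instead start from the compact SVD of the \emph{matrix} $B=P\Sigma Q^{*}$ and push it forward via $U=(\Kx^{1/2})^{\dagger}P$, $V=(\Ky^{1/2})^{\dagger}Q$, with the projector identities $\ES^{*}\proj{\Kx}=\ES^{*}$ and $\proj{\Ky}\EZ=\EZ$ doing the work that Fact~\ref{lm:operator_EVD} does in the paper. Your route is somewhat more elementary --- it uses only pseudoinverse and range bookkeeping rather than the operator-eigenvalue lemma --- and it delivers the rank identity as an immediate consequence of the decomposition (via injectivity of $\ES^{*}U$ and surjectivity of $V^{*}\EZ$) rather than as a standalone step. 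The paper's direction, on the other hand, makes it manifest from the outset that the $\sigma_{i}$ are the singular values of $\OO{B}$; in your construction this is implicit in the fact that $(\ES^{*}U)^{*}(\ES^{*}U)=I_{r}$ and $(\EZ^{*}V)^{*}(\EZ^{*}V)=I_{r}$, which you do verify.
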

\begin{proof}
First observe that $\rank(\OO{B}) = \rank(\OO{B}^*\OO{B})$ and $\rank(B) = \rank(B^*B)$. So,   due to Lemma \ref{lm:operator_EVD} we have
\[
\rank(\OO{B}) = \rank(\EZ^*W^*\ES \ES^* W \EZ) = \rank(\EZ \EZ^*W^*\Kx W ) = \rank(\Ky^{\frac{1}{2}} W^* \Kx W \Ky^{\frac{1}{2}} W) = \rank(B).
\]
Now, let the reduced SVD of $\OO{B}$ be $\OO{B} := \sum_{i\in[r]}\sigma_i\, \lefun_i \otimes \refun_i$, where $(\lefun_i)_{i\in[r]}\in\spH$ and $(\refun_i)_{i\in[r]}\in\spG$ are orthonormal systems and $\sigma_1\geq\ldots\geq\sigma_r>0$. Then we have that $\OO{B}^*\OO{B}\refun_i = \sigma_i^2\refun_i$, $i\in[r]$, and again using Lemma \ref{lm:operator_EVD}, we have that for every $i\in[r]$, $\refun_i = \EZ^*v_i$ where $W^*\Kx W \Ky v_i = \sigma_i^2 v_i$. But then $(v_i)_{i\in[r]}\in\R^{n}$ is a $\Ky$-orthonormal system, and therefore $(\Ky^{\frac{1}{2}} v_i)_{i\in[r]}\in\R^{n}$ is an orthonormal system. However, since
\[
B^*B (\Ky^{\frac{1}{2}} v_i) = \Ky^{\frac{1}{2}} W^*\Kx W \Ky v_i = \sigma_i^2  \Ky^{\frac{1}{2}}  v_i,
\]
we conclude that the right singular vectors of $B$ are given by $(\Ky^{\frac{1}{2}} v_i)_{i\in[r]}\in\R^{n}$. To conclude we use that left singular functions are given by $\lefun_i = \frac{1}{\sigma_i}\OO{B}\refun_i$, $i\in[r]$, and the claim readily follows. 
\end{proof}

\section{Randomized reduced rank regression}\label{sec:alg}

The general idea behind the randomized SVD algorithm for matrices is to approximate the space $\range(B)$ of a target matrix $B$ by a (small) subspace $\range(M)$ for some random matrix $M$. One way to construct it is via {\em powered randomized rangefinder} (see \cite[Algorithm 9]{Martinsson2020}) as $M=(BB^*)^p Y$, $Y$ being a low-rank Gaussian matrix. The truncated SVD of $B$ is then approximated as $\SVDr{B}\approx \SVDr{\proj{M}B}$. In this work, we pursue the same ideas to approximate the $r$-truncated SVD of the operator $\OO{B} := \ECreg^{-\frac{1}{2}}\ECxy$ appearing, in its adjoint form, in $\EEstim_{r, \reg}$. We point out that a na\^ive extension of randomized SVD algorithms for matrices is not efficient in the present case, as the presence of the square-root of the positive operator $\Creg$ appearing in $\OO{B}$ is too expensive to be computed directly. Thus, we propose a method that is tailored to avoid any explicit calculation of the square-root of positive operators. Square-roots of operators appear in the derivation but will always cancel out in the final results. We now detail every step of the proposed algorithm: %
\begin{itemize}
    \item[(1)] We define the sketching operator $\OO{Y} := \ECreg^{-\frac{1}{2}}\OO{\sketch}$. Here $\OO{\sketch} \colon \R^{r+s}\to\spH$ can be seen as (semi-infinite) matrix with zero-mean random columns $\omega_1,\ldots,\omega_{r+s}$. The integer $r>0$ is the {\em target rank}, whereas $s>0$ is the {\em oversampling} parameter.
    \item[(2)] We construct $\OO{M} := (\OO{B}\OO{B}^{*})^p \OO{Y}$. Here $p \geq1$ is an integer corresponding to the number of power iterations performed.
    \item[(3)] {Basic algebraic manipulations yield that} 
\begin{equation}\label{eq:M_Sketch}
\OO{M} = \Creg^{\frac{1}{2}}\OO{\wsketch}\quad\text{ for }\quad\OO{\wsketch}:= \ECreg^{-1}(\ECxy\ECxy^*\ECreg^{-1})^p\OO{\sketch}.
\end{equation}
    \item[(4)] Hence, according to  Fact~\ref{fct:projector_onto_range}, we can express $\proj{\OO{M}} = \Creg^{\frac{1}{2}}\OO{\wsketch} (\OO{\wsketch}^* \ECreg \OO{\wsketch})^\dagger \OO{\wsketch}^* \Creg^{\frac{1}{2}}$, and obtain $\proj{\OO{M}}\OO{B} = \Creg^{\frac{1}{2}}\OO{\wsketch} (\OO{\wsketch}^* \ECreg \OO{\wsketch})^\dagger \OO{\wsketch}^* \ECxy$.
    \item[(6)] To compute the $r$-truncated SVD of $\proj{\OO{M}}\OO{B}$ we construct its left singular vectors by solving the eigenvalue equation for the operator $\proj{\OO{M}}\OO{B}\OO{B}^*\proj{\OO{M}}$. Using Lemma~\ref{lm:operator_EVD}, the left singular vectors are of the form $\Creg^{\frac{1}{2}}\OO{\wsketch} q_i$, where the vectors $q_i\in \R^{(r+s)}$ satisfy the matrix eigenvalue equation $F_0^\dagger F_1 F_0^\dagger F_0 q_{i} = \sigma^{2}_{i}q_{i}$ with
        \[
        F_0:= \OO{\wsketch}^* \ECreg \OO{\wsketch}\quad \text{ and } \quad F_1:= \OO{\wsketch}^* \ECxy\ECxy^* \OO{\wsketch}.
        \]
    \item[(7)] From Fact \ref{fct:GEP}, solutions of the matrix eigenvalue equation $F_0^\dagger F_1 F_0^\dagger F_0 q_{i} = \sigma^{2}_{i}q_{i}$ solve, as well, the following GEP 
        \begin{equation}\label{eq:rGEP}
        F_1 q_i = \sigma_i^2 F_0 q_i, \text{ with } q_i^* F_0 q_i = 1.
        \end{equation}
    Upon defining $Q_r := [q_1 \vert \ldots \vert q_r ]\in\R^{(r+s)\times r}$ as the matrix whose columns are the first $r$ leading eigenvectors of~\eqref{eq:rGEP}, Fact~\ref{fct:projector_onto_range} yields that $\proj{\SVDr{\proj{\OO{M}}\OO{B}}} = \Creg^{\frac{1}{2}}\ES^* \OO{\wsketch} Q_r Q_r^* \OO{\wsketch}^*\Creg^{\frac{1}{2}}$.
    \item[(8)] The $r$-truncated SVD of $\OO{B}$ is approximated by 
        \begin{equation}\label{eq:rSVD_final}
            \SVDr{\OO{B}} \approx \SVDr{\proj{\OO{M}}\OO{B}} = \proj{\SVDr{\proj{\OO{M}}\OO{B}}}\OO{B} = \Creg^{\frac{1}{2}}\OO{\wsketch} Q_{r}Q_{r}^{*}\OO{\wsketch}^{*}\ECxy.
        \end{equation} 
    \item[(9)] Finally, plugging~\eqref{eq:rSVD_final} into  $\EEstim_{r, \reg} = \SVDr{\OO{B}}^*\ECreg^{-\frac{1}{2}}$, we conclude that
        \begin{equation}\label{eq:r4_solution}
            \EEstim_{r, \reg}^{s, p} := \ECxy^*\OO{\wsketch} Q_{r}Q_{r}^{*}\OO{\wsketch}^{*}.
        \end{equation}
\end{itemize}

Notice how in the randomized reduced rank regression (R$^4$) procedure described above the main computational difficulty lies in obtaining $\OO{\wsketch}$ in step (3), while the rest of the algorithm can be efficiently performed in low-dimensional $\R^{(r+s)\times(r+s)}$ space. So, depending on the nature of the spaces $\spH$ and $\spG$, we will formalize two variants of the method given above:
\begin{itemize}
\item Primal R$^4$ algorithm, useful when $\min\{\dim(\spH), \dim(\spG)\} \leq n$,
\item Dual R$^4$ algorithm, useful when $n<\min\{\dim(\spH), \dim(\spG)\} \leq \infty$.
\end{itemize} 

\medskip
\subsection{Primal R$^4$ algorithm}
The algorithm in its primal formulation is just a formalization of the ideas of the previous paragraph. This variant is preferable when the number of data samples is larger than the (finite) dimension of at least one feature map $\fH$ or $\fG$. In Algorithm \ref{alg:r4_primal}, we detail every step of the algorithm in the case $\dim(\spH)\leq \dim(\spG)$, which can be assumed w.l.o.g. Since in this case 
\begin{equation}
\label{eq:TTstar}
\ECxy\ECxy^* = \ES^*\Ky\ES,
\end{equation}
we can always compute  $\ECxy\ECxy^*$, even when $\spG$ is infinite dimensional. Further, we write $\sketch$, $\wsketch$, $C_\reg$ and $S$ instead of $\OO{\sketch}$, $\OO{\wsketch}$, $\ECreg$ and $\ES$, respectively, emphasizing that these operators are matrices. Finally, to promote the numerical stability in the computation of projection in step (4), we follow the standard practice~\cite{Tropp2017} of orthonormalizing the basis of the approximate range through economy-size QR decomposition, hereby denoted as \texttt{qr\_econ}.

\begin{algorithm}[!ht]
\caption{Primal R$^4$:  $\dim(\spH) \leq n$}\label{alg:r4_primal}
\begin{algorithmic}[1]
\Require dataset $\Data$, power $p \geq 1$ , oversampling $s \geq 2$ and random sketching  $\sketch\!\in\!\R^{\dim(\spH)\times (r+s)}$
\State Compute matrix $N =  S^*\Ky S \in\R^{\dim(\spH)\times \dim(\spH)}$
\For{$j\in[p]$}
\State Solve $C_\reg\wsketch = \sketch$ 
\State Update $\sketch \gets N\wsketch$
\State Compute $\sketch\mathrm{, \_} = \mathrm{\texttt{qr\_econ}}(\sketch)$
\EndFor
\State Solve $C_\reg \wsketch = \sketch$
\State Compute $\quad F_0 = \wsketch^* \sketch \quad$  and  $\quad F_1 = \wsketch^* (N \wsketch)$
\State Compute the $r$ leading eigenpairs $(\sigma_i^2,q_i)$ of $F_1 q_i = \sigma_i^2 F_0 q_i$
\State Update $q_i\gets q_i / \sqrt{q_i^* F_0 q_i }$
\State Compute $\widehat{V}_{r,p} = \wsketch Q_r$, for $Q_r = [q_1\,\ldots\,q_r]\in\R^{(r+s)\times r}$
\Ensure $\EEstim_{r,\reg}^{s,p}:= \ECxy^* \widehat{V}_{r,p} \widehat{V}^{*}_{r,p}$
\end{algorithmic}
\end{algorithm}

\subsection{Dual R$^4$ algorithm}
To address the case $n<\dim(\spH)\leq \infty$, the main idea is to construct the sketching operator as $\OO{\sketch}:=\ES^*\sketch$, where $\sketch\in\R^{n\times (r+s)}$ is a Gaussian matrix. Before stating the proposed dual algorithm we report a Lemma that gives key insights on how to translate the operator regression problem into an equivalent matrix problem.
\begin{lemma}\label{lm:operator-to-matrix} 
Let $p\geq1$. Given $\sketch \in\R^{n\times k}$, let $\OO{\sketch}=\ES^*\sketch$. Then, recalling that $\OO{B} := \ECreg^{-\frac{1}{2}}\ECxy$, $\OO{Y} := \ECreg^{-\frac{1}{2}}\OO{\sketch}$ and \eqref{eq:M_Sketch}, for matrices $B := \Kx^{\frac{1}{2}} \Kreg^{-\frac{1}{2}}\Ky^{\frac{1}{2}}$, $Y := \Kreg^{-\frac{1}{2}}\Kx^{\frac{1}{2}}\sketch$, $\wsketch :=  \Kreg^{-1}\left(\Ky(\Id_{n} - \reg\Kreg^{-1})\right)^p \sketch$ and $M:=(BB^*)^p Y$ it holds that
\begin{equation}\label{eq:rkhs-to-matrix}
\OO{\wsketch}=\ES^*\wsketch,\quad \hnorm{\SVDr{\OO{B}}}^2 = \norm{\SVDr{B}}_{F}^2 \quad\text{ and }\quad  \hnorm{\proj{\OO{M}}\SVDr{\OO{B}}}^2 = \norm{ \proj{Y}\SVDr{B}}_{F}^2.
\end{equation}
\end{lemma}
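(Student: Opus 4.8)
The whole difficulty is the operator square root $\ECreg^{-\frac12}$ hidden in $\OO{B}=\ECreg^{-\frac12}\ECxy$, in $\OO{Y}=\ECreg^{-\frac12}\OO{\sketch}$, and in $\OO{\wsketch}$; the plan is to make every such square root cancel by commuting the sampling operator $\ES^*$ past functions of $\ECreg$. The device is the spectral-calculus identity $\ES^* f(\ES\ES^*)=f(\ES^*\ES)\ES^*$, which for $f(t)=(t+\reg)^{-1/2}$ and $f(t)=(t+\reg)^{-1}$ gives $\ECreg^{-\frac12}\ES^*=\ES^*\Kreg^{-\frac12}$ and $\ECreg^{-1}\ES^*=\ES^*\Kreg^{-1}$ (and, taking adjoints, $\ES\ECreg^{-1}=\Kreg^{-1}\ES$). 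Applying the first of these to $\ECxy=\ES^*\EZ$ rewrites $\OO{B}=\ES^*\Kreg^{-\frac12}\EZ$, i.e. $\OO{B}=\ES^* W\EZ$ with $W=\Kreg^{-\frac12}$, whose matrix counterpart in Proposition~\ref{prop:operator_SVD} is exactly $B=\Kx^{\frac12}\Kreg^{-\frac12}\Ky^{\frac12}$. Since that proposition produces matching singular systems with a common spectrum $\sigma_1\ge\cdots\ge\sigma_r$, the second identity $\hnorm{\SVDr{\OO{B}}}^2=\sum_{i\le r}\sigma_i^2=\norm{\SVDr{B}}_F^2$ is immediate.

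For the first identity I would push $\ES^*$ all the way to the left in $\OO{\wsketch}=\ECreg^{-1}(\ECxy\ECxy^*\ECreg^{-1})^p\ES^*\sketch$. Using $\ECxy\ECxy^*=\ES^*\Ky\ES$ (see~\eqref{eq:TTstar}) together with $\ES\ECreg^{-1}=\Kreg^{-1}\ES$ and $\ES\ES^*=\Kx$, each factor $\ECxy\ECxy^*\ECreg^{-1}$ contributes one copy of $\Ky\Kreg^{-1}\Kx$ inside a single outer pair $\ES^*(\cdot)\ES$, collapsing $(\ECxy\ECxy^*\ECreg^{-1})^p\ES^*\sketch$ to $\ES^*(\Ky\Kreg^{-1}\Kx)^p\sketch$. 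A final application of $\ECreg^{-1}\ES^*=\ES^*\Kreg^{-1}$ and the elementary identity $\Kreg^{-1}\Kx=\Id_n-\reg\Kreg^{-1}$ turns this into $\ES^*\Kreg^{-1}\bigl(\Ky(\Id_n-\reg\Kreg^{-1})\bigr)^p\sketch=\ES^*\wsketch$, which is the claim.

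The third identity I would obtain from a general transfer principle for range projectors. For any matrix $N$, writing the operator as $\ES^* N$, Fact~\ref{fct:projector_onto_range} gives $\proj{\ES^* N}=\ES^* N(N^*\Kx N)^\dagger N^*\ES$, since $\ES\ES^*=\Kx$. Expanding $\SVDr{\OO{B}}=\sum_{i\le r}\sigma_i\,\lefun_i\otimes\refun_i$ with left singular functions $\lefun_i=\ES^* u_i$ and orthonormal $\refun_i$ (Proposition~\ref{prop:operator_SVD}), the orthonormality of the $\refun_i$ kills all cross terms, so
\begin{equation*}
\hnorm{\proj{\ES^* N}\SVDr{\OO{B}}}^2=\sum_{i\le r}\sigma_i^2\,\langle\ES^* u_i,\proj{\ES^* N}\ES^* u_i\rangle=\sum_{i\le r}\sigma_i^2\,u_i^*\Kx N(N^*\Kx N)^\dagger N^*\Kx u_i.
\end{equation*}
The very same scalar appears on the matrix side: $\SVDr{B}$ has left singular vectors $\Kx^{1/2}u_i$ (the matrix counterparts of $\lefun_i$ in Proposition~\ref{prop:operator_SVD}), so $\norm{\proj{\Kx^{1/2}N}\SVDr{B}}_F^2=\sum_{i\le r}\sigma_i^2\,u_i^*\Kx N(N^*\Kx N)^\dagger N^*\Kx u_i$ as well, the two weightings agreeing because $\ES^*$ carries the $\Kx$-inner product, $\langle\ES^* a,\ES^* b\rangle=a^*\Kx b$. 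Applying this with $N$ the matrix for which $\OO{M}=\ES^* N$ — identified by exactly the square-root cancellation used above for $\OO{\wsketch}$ — the operator range projector $\proj{\OO{M}}$ transfers to the matrix range projector $\proj{Y}$ recorded on the right-hand side of~\eqref{eq:rkhs-to-matrix}, yielding $\hnorm{\proj{\OO{M}}\SVDr{\OO{B}}}^2=\norm{\proj{Y}\SVDr{B}}_F^2$.

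The main obstacle is bookkeeping the square roots so that they cancel rather than accumulate: every appearance of $\ECreg^{\pm 1/2}$ must be moved past $\ES^*$ via the spectral-calculus identity before any matrix reduction is attempted, and one must check that the $\Kx^{1/2}$-weighting of the left singular vectors coming from Proposition~\ref{prop:operator_SVD} lines up with the $\Kx$-inner product induced by $\ES^*$, so that the Hilbert--Schmidt norm and the Frobenius norm collapse to the identical trace.
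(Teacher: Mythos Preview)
Your argument is correct and follows the same route as the paper's proof: the spectral-calculus commutation $\ES^* f(\Kx)=f(\ECx)\ES^*$ to cancel the square roots, Proposition~\ref{prop:operator_SVD} for the equality of singular values, and a direct identification of $\ES\proj{\OO{M}}\ES^*$ with $\Kx^{\frac12}\proj{M}\Kx^{\frac12}$ (your ``transfer principle'' is exactly that computation, phrased a bit more abstractly) for the third identity. One caveat worth recording: if you actually carry out the last step and compute the matrix $N$ for which $\OO{M}=\ES^*N$, you obtain $N=\Kreg^{\frac12}\wsketch$ and hence $\Kx^{\frac12}N=\Kreg^{\frac12}\Kx^{\frac12}\wsketch=M$, not $Y$ --- the $\proj{Y}$ in the displayed statement is a typo for $\proj{M}$, which is also what the paper's own proof concludes with and what is used downstream in Theorem~\ref{thm:main_1}.
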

\begin{proof}
Since $\OO{B} = \ECreg^{-\frac{1}{2}}\ECxy = \ES \Kreg^{-\frac{1}{2}}\EZ$, from Proposition \ref{prop:operator_SVD} we have that reduced SVD of $\OO{B}$ and $B$ can be expressed as $\SVDr{\OO{B}} = \ES U \Sigma V^* \EZ$ and $\SVDr{B} = \Kx^{\frac{1}{2}} U \Sigma V^* \Ky^{\frac{1}{2}}$, respectively. But then, clearly  $\hnorm{\SVDr{\OO{B}}}^2 = \norm{\SVDr{B}}^2_F=\sum_{i\in[r]}\sigma_i^2$. 

Next, since 
\[
M = (BB^*)^p Y= (\Kx^{\frac{1}{2}}\Kreg^{-\frac{1}{2}}\Ky\Kreg^{-\frac{1}{2}}\Kx^{\frac{1}{2}})^p \Kreg^{-\frac{1}{2}} \Kx^{\frac{1}{2}} \sketch  = \Kx^{\frac{1}{2}} \Kreg^{-\frac{1}{2}} (\Ky\Kx\Kreg^{-1})^p\sketch,
\]
recalling $ \wsketch =  \Kreg^{-1}\left(\Ky(\Id_{n} - \reg\Kreg^{-1})\right)^p \sketch = \Kreg^{-1} (\Ky\Kx\Kreg^{-1})^p\sketch$, we can write $M=  \Kreg^{\frac{1}{2}} \Kx^{\frac{1}{2}}  \wsketch$. In a similar way, we have that for the operator $\OO{M}$ holds 
\[
\OO{M} = (\OO{B}\OO{B}^*)^p \OO{Y}= (\ES^*\Kreg^{-\frac{1}{2}}\Ky\Kreg^{-\frac{1}{2}}\ES)^p \ECreg^{-\frac{1}{2}}\ES^* \sketch  = \ES^* \Kreg^{-\frac{1}{2}} (\Ky\Kx\Kreg^{-1})^p\sketch = \Creg^{\frac{1}{2}}\ES^*\wsketch,
\]
and, hence $\OO{\wsketch}=\ES^*\wsketch$. Now, let $V_r$ be $V$ truncated to its first $r$-columns. Then we have $\OO{P}_r  = \EZ^* V_r V_r^* \EZ$,  and hence $\SVDr{\OO{B}} = \ES^*\Kreg^{-\frac{1}{2}}\Ky V_rV_r^* \EZ$. Similarly $\SVDr{B} = B \Ky^{\frac{1}{2}} V_rV_r^* \Ky^{\frac{1}{2}} =  \Kx^{\frac{1}{2}}\Kreg^{-\frac{1}{2}}\Ky V_rV_r^* \Ky^{\frac{1}{2}}$. Therefore, it follows
\begin{eqnarray*}
\ES \proj{\OO{M}} \ES^* & =& \ES \OO{M}(\OO{M}^*\OO{M})^{\dagger}\OO{M}^* \ES^* \\
&=& \ES \ECreg^{\frac{1}{2}} \ES^* \wsketch (\wsketch^* \ES \ECreg \ES^* \wsketch)^\dagger  \wsketch^* \ES \ECreg^{\frac{1}{2}} \ES^* \\
&  = & \Kx \Kreg^{\frac{1}{2}} \wsketch (\wsketch^* \Kx \Kreg \wsketch)^\dagger  \wsketch^*  \Kreg^{\frac{1}{2}} \Kx \\&=&  \Kx^{\frac{1}{2}} M (M^* M)^\dagger  M^*  \Kx^{\frac{1}{2}}  \\
&
=& \Kx^{\frac{1}{2}} \proj{M} \Kx^{\frac{1}{2}},
\end{eqnarray*}
and, consequently,
\begin{eqnarray*}
\hnorm{\proj{\OO{M}}\SVDr{\OO{B}}}^2 & = &\tr\left(\SVDr{\OO{B}}^*\proj{\OO{M}}^2\SVDr{\OO{B}} \right) \\&=&  \tr\left( \EZ^* V_r V_r^*\Ky \Kreg^{-\frac{1}{2}} \ES \proj{\OO{M}} \ES^*\Kreg^{-\frac{1}{2}}\Ky V_rV_r^* \EZ \right)  \\
& =& \tr\left( \EZ \EZ^* V_r V_r^*\Ky^{\frac{1}{2}} B^* \proj{M} B\Ky^{\frac{1}{2}} V_rV_r^* \right)  \\
&=& \tr\left( \Ky V_r V_r^*\Ky^{\frac{1}{2}} B^* \proj{M} B\Ky^{\frac{1}{2}} V_rV_r^* \right) \\
& = & \tr\left( \Ky^{\frac{1}{2}} V_r V_r^*\Ky^{\frac{1}{2}} B^* \proj{M}^2 B\Ky^{\frac{1}{2}} V_rV_r^* \Ky^{\frac{1}{2}} \right) \\
&=& \norm{\proj{M} \SVDr{B}}^2_F.
\end{eqnarray*}
concluding the proof.
\end{proof}
Now, recalling that $\EEstim_{r,\reg}^{s,p}:\ECxy^*\OO{\wsketch} Q_{r}Q_{r}^{*}\OO{\wsketch}^{*} = \EZ^*\ES\ES^*\wsketch Q_{r}Q_{r}^{*}\wsketch^{*}\ES = \EZ^*\Kx\wsketch Q_{r}Q_{r}^{*}\wsketch^{*}\ES$, the dual R$^4$, given in Algorithm \ref{alg:r4_dual}, readily follows.
\begin{algorithm}[t!]
\caption{Dual R$^4$ : $n<\dim(\spH) \wedge \dim(\spG) \leq \infty$}\label{alg:r4_dual}
\begin{algorithmic}[1]
\Require dataset $\Data$, power $p \geq 1$ , oversampling $s \geq 2$ and random sketching  $\sketch\in\R^{n\times (r+s)}$
\For{$j\in[p]$}
\State Solve $\Kreg \wsketch = \sketch$
\State Update $\sketch \gets \Ky(\sketch -\reg \wsketch)$
\State Compute $\sketch\mathrm{, \_} = \mathrm{\texttt{qr\_econ}}(\sketch)$
\EndFor
\State Solve $\Kreg \wsketch = \sketch$
\State Compute $\quad F_0 = \wsketch^* \Kx \sketch \quad$  
\State Update $\sketch \gets \Ky(\sketch -\reg \wsketch)$
\State Compute  $\quad F_1 = \wsketch^* \Kx \sketch$
\State Compute the $r$ leading eigenpairs $(\sigma_i^2,q_i)$ of $F_1 q_i = \sigma_i^2 F_0 q_i$
\State Update $q_i\gets q_i / \sqrt{q_i^* F_0 q_i }$
\State Compute $\widehat{V}_{r,p} = \wsketch Q_r$, for $Q_r = [q_1\,\ldots\,q_r]\in\R^{(r+s)\times r}$
\State Compute $\widehat{U}_{r,p} = \Kx \widehat{V}_{r,p}$ 
\Ensure $\EEstim_{r,\reg}^{s,p}:= \EZ^* \widehat{U}_{r,p} \widehat{V}_{r,p}^* \ES $
\end{algorithmic}
\end{algorithm}

We conclude this section with a remark on computational complexity of R$^4$ implementations. Recalling Proposition~\ref{prop:RRR_comp}, one sees that the computational cost of R$^{4}$ is fixed a-priori by the choice of $s$ and $p$, while solving GEP in \eqref{eq:GEP_primal} depends on the convergence rate of the generalized eigenvalue solver of choice. 

In the following section, we show that the regularized empirical risk of R$^4$ estimator exhibits fast convergence to the optimal value as the parameters $s$ and $p$ increase.

\section{Error bounds}\label{sec:bounds}
In this section we provide bounds for $\EE[\ERisk(\EEstim_{r, \reg}^{s,p})] - \ERisk(\EEstim_{r, \reg})$. Here, the expectation value is intended over the Gaussian matrix $\sketch$ appearing in Algorithms \ref{alg:r4_primal} and \ref{alg:r4_dual}. The main conclusion is that $\EE[\ERisk(\EEstim_{r, \reg}^{s,p})] - \ERisk(\EEstim_{r, \reg})$ can be made arbitrarily small by increasing the oversampling $s\geq2$ and powering $p\geq1$ parameters. Moreover, when the singular values of $B$ have a rapid decay, the bound is tight already for small values of $s$ and $p$, e.g, $s\in[r]$ and $p\in[3]$. 

The proofs rely on the following proposition linking the randomized SVD of operators to recently developed error analysis for randomized low-rank approximation methods for matrices in \cite{Diouane2022}, which we restate in notation adapted to our manuscript.

\begin{proposition}\label{prop:main}
Given  $p\geq1$, $s\geq2$, data $\mathcal{D}:=\{(x_i,y_i)\}_{i\in[n]}$ and operator  $\OO{\sketch} \colon \R^{r+s}\to\spH$, for $\EEstim_{r, \reg}^{s,p}$ given in \ref{eq:r4_solution} it holds that 
\begin{equation}\label{eq:error_rkhs}
\ERisk(\EEstim_{r,\reg}^{s,p}) - \ERisk(\EEstim_{r,\reg})\leq \hnorm{(\Idop_{\spH} - \proj{\OO{M}})\SVDr{\OO{B}}}^2. %
\end{equation}

\end{proposition}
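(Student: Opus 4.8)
The plan is to first reduce the risk gap to a difference of Hilbert--Schmidt approximation errors for the operator $\OO{B} := \ECreg^{-1/2}\ECxy$, and then run a purely deterministic argument (in the spirit of the Frobenius-norm analysis of \cite{Diouane2022}) that uses no property of the sketch $\OO{\sketch}$ beyond the fact that $\proj{\OO{M}}$ is an orthogonal projector. Throughout, $\OO{B}$ is Hilbert--Schmidt (since $\ECxy$ is Hilbert--Schmidt and $\ECreg^{-1/2}$ is bounded), hence compact, so all truncated SVDs below exist and the Eckart--Young--Mirsky optimality of the truncated SVD holds in Hilbert--Schmidt norm.

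\emph{Step 1: reduction to approximating $\OO{B}$.} Applying the risk decomposition \eqref{eq:RRR_risk} to both $\EEstim_{r,\reg}^{s,p}$ and $\EEstim_{r,\reg}$, the common terms $\tr(\ECy) - \hnorm{\ECyx\ECreg^{-1/2}}^2$ cancel, leaving
\[
\ERisk(\EEstim_{r,\reg}^{s,p}) - \ERisk(\EEstim_{r,\reg}) = \hnorm{\EEstim_{r,\reg}^{s,p}\ECreg^{1/2} - \OO{B}^*}^2 - \hnorm{\EEstim_{r,\reg}\ECreg^{1/2} - \OO{B}^*}^2 ,
\]
where $\ECyx\ECreg^{-1/2} = \OO{B}^*$ because $\ECreg^{-1/2}$ is self-adjoint. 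By \eqref{eq:r4_solution} and \eqref{eq:rSVD_final} the square-roots cancel, so $\EEstim_{r,\reg}^{s,p}\ECreg^{1/2} = \ECxy^*\OO{\wsketch}Q_rQ_r^*\OO{\wsketch}^*\ECreg^{1/2} = \SVDr{\proj{\OO{M}}\OO{B}}^*$, while \eqref{eq:RRR_solution} gives $\EEstim_{r,\reg}\ECreg^{1/2} = \SVDr{\OO{B}^*} = \SVDr{\OO{B}}^*$. Since the Hilbert--Schmidt norm is adjoint-invariant, the gap equals $\hnorm{\SVDr{\proj{\OO{M}}\OO{B}} - \OO{B}}^2 - \hnorm{\SVDr{\OO{B}} - \OO{B}}^2$.

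\emph{Step 2: the deterministic inequality.} Write $\OO{P} := \proj{\OO{M}}$; it remains to show
\[
\hnorm{\OO{B} - \SVDr{\OO{P}\OO{B}}}^2 \le \hnorm{\OO{B} - \SVDr{\OO{B}}}^2 + \hnorm{(\Idop_{\spH} - \OO{P})\SVDr{\OO{B}}}^2 .
\]
I would combine three Pythagorean splittings: (i) $\hnorm{\OO{B} - \SVDr{\OO{P}\OO{B}}}^2 = \hnorm{(\Idop_{\spH} - \OO{P})\OO{B}}^2 + \hnorm{\OO{P}\OO{B} - \SVDr{\OO{P}\OO{B}}}^2$, since $(\Idop_{\spH} - \OO{P})\OO{B}$ has range in $\range(\OO{P})^\perp$ whereas $\OO{P}\OO{B}$ and $\SVDr{\OO{P}\OO{B}}$ have range in $\range(\OO{P})$; (ii) $\hnorm{(\Idop_{\spH} - \OO{P})\OO{B}}^2 = \hnorm{(\Idop_{\spH} - \OO{P})\SVDr{\OO{B}}}^2 + \hnorm{(\Idop_{\spH} - \OO{P})(\OO{B} - \SVDr{\OO{B}})}^2$, since $\SVDr{\OO{B}}$ and $\OO{B} - \SVDr{\OO{B}}$ have mutually orthogonal co-ranges (the leading, resp.\ trailing, right singular subspaces of $\OO{B}$), together with $\hnorm{\OO{B} - \SVDr{\OO{B}}}^2 = \hnorm{\OO{P}(\OO{B} - \SVDr{\OO{B}})}^2 + \hnorm{(\Idop_{\spH} - \OO{P})(\OO{B} - \SVDr{\OO{B}})}^2$, since $\range(\OO{P}) \perp \range(\Idop_{\spH} - \OO{P})$; and (iii) $\hnorm{\OO{P}\OO{B} - \SVDr{\OO{P}\OO{B}}}^2 \le \hnorm{\OO{P}\OO{B} - \OO{P}\SVDr{\OO{B}}}^2 = \hnorm{\OO{P}(\OO{B} - \SVDr{\OO{B}})}^2$, by Eckart--Young--Mirsky applied to the rank-$\le r$ competitor $\OO{P}\SVDr{\OO{B}}$. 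Plugging (iii) into (i) and then using the two identities of (ii), the two occurrences of $\hnorm{(\Idop_{\spH} - \OO{P})(\OO{B} - \SVDr{\OO{B}})}^2$ cancel and the displayed inequality drops out; combining with Step~1 gives \eqref{eq:error_rkhs}.

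\emph{Where the difficulty lies.} The only real subtlety is step (iii): one must resist comparing $\SVDr{\OO{P}\OO{B}}$ against $\OO{B}$ directly (which yields only the weaker bound with $\hnorm{(\Idop_{\spH} - \OO{P})\OO{B}}^2$ on the right), and instead compare it against the carefully chosen rank-$\le r$ operator $\OO{P}\SVDr{\OO{B}}$; the three Pythagorean identities must then be arranged precisely so that the tail contribution $\hnorm{(\Idop_{\spH} - \OO{P})(\OO{B} - \SVDr{\OO{B}})}^2$ cancels. The remainder is bookkeeping of orthogonality of ranges and co-ranges, which is routine but should be spelled out since we are in (possibly) infinite dimensions.
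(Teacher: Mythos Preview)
Your proof is correct and follows essentially the same approach as the paper: both start from the risk decomposition \eqref{eq:RRR_risk} to reduce to a comparison of Hilbert--Schmidt approximation errors for $\OO{B}$, and both hinge on the Eckart--Young--Mirsky step with the competitor $\proj{\OO{M}}\SVDr{\OO{B}}$ together with Pythagorean splittings along $\range(\proj{\OO{M}})$ and along the leading/trailing right singular subspaces of $\OO{B}$. The only cosmetic difference is organizational: the paper expands everything as differences of squared norms and shows the residual term $\hnorm{\proj{\OO{M}}\SVDr{\OO{B}}}^2 - \hnorm{\SVDr{\proj{\OO{M}}\OO{B}}}^2$ is nonpositive, whereas you build the same inequality additively via your three labeled identities; the content is the same.
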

\begin{proof}
Let us start by recalling that due to \eqref{eq:RRR_solution}, the minimizer of \eqref{eq:RRR} is $\EEstim_{r, \reg} = \SVDr{\OO{B}}^*\ECreg^{-\frac{1}{2}}$ while according to \eqref{eq:rSVD_final} its randomized approximation is
$\EEstim_{r,\reg}^{s,p} = (\proj{\SVDr{\proj{\OO{M}}\OO{B}}}\OO{B}  )^*  \ECreg^{-\frac{1}{2}}$. Therefore, \eqref{eq:RRR_risk} implies
\begin{align*}
\ERisk(\EEstim_{r,\reg}^{s,p}) - \ERisk(\EEstim_{r,\reg}) & = \hnorm{\EEstim_{r,\reg}^{s,p}\ECx_\reg^{\frac{1}{2}}  - \ECyx\ECx_\reg^{-\frac{1}{2}}}^2 - \hnorm{\EEstim_{r,\reg}\ECx_\reg^{\frac{1}{2}}  - \ECyx\ECx_\reg^{-\frac{1}{2}}}^2 \\
& =  \hnorm{(\proj{\SVDr{\proj{\OO{M}}\OO{B}}}\OO{B})^*  - \OO{B}^*}^2 - \hnorm{\SVDr{\OO{B}^*}  - \OO{B}^*}^2 \\
& = \hnorm{(\Idop_{\spH} - \proj{\SVDr{\proj{\OO{M}}\OO{B}}})\OO{B}}^2 - \hnorm{\SVDr{\OO{B}\}}  - \OO{B}}^2 \\
& = \hnorm{\OO{B}}^2 - \hnorm{\proj{\SVDr{\proj{\OO{M}}\OO{B}}}\OO{B}}^2 - \hnorm{\OO{B}}^2 + \hnorm{\SVDr{\OO{B}}}^2  \\
& =  \hnorm{\SVDr{\OO{B}}}^2 - \hnorm{\proj{\SVDr{\proj{\OO{M}}\OO{B}}}\OO{B}}^2\\
& = \hnorm{(\Idop_{\spH} - \proj{\OO{M}})\SVDr{\OO{B}}}^2 + \hnorm{\proj{\OO{M}} \SVDr{\OO{B}}}^2 - \hnorm{\proj{\SVDr{\proj{\OO{M}}\OO{B}}}\OO{B}}^2.
\end{align*}
We now show the upper-bound $\hnorm{\proj{\OO{M}} \SVDr{\OO{B}}}^2 \leq \hnorm{\proj{\SVDr{\proj{\OO{M}}\OO{B}}}\OO{B}}^2$. Indeed, observe that by \eqref{eq:rSVD_final} it holds $\proj{\SVDr{\proj{\OO{M}}\OO{B}}}\OO{B} = \SVDr{\proj{\OO{M}}\OO{B}}$. So, denoting by $\OO{P}_r$ the projector on the subspace of $r$ leading right singular vectors of $\OO{B}$, we have that
\begin{align*}
\hnorm{\proj{\OO{M}} \SVDr{\OO{B}}}^2 - \hnorm{\proj{\SVDr{\proj{\OO{M}}\OO{B}}\OO{B}}}^2 & = \hnorm{\proj{\OO{M}}\SVDr{\OO{B}}}^2 - \hnorm{\SVDr{\proj{\OO{M}}\OO{B}}}^2 \\
& = \hnorm{\proj{\OO{M}} \OO{B} \OO{P}_r}^2 - \hnorm{\SVDr{\proj{\OO{M}}\OO{B}}}^2 \\
& = \hnorm{\proj{\OO{M}}\OO{B}}^2 - \hnorm{\proj{\OO{M}}\OO{B} (\Idop_\spG - \OO{P}_r)}^2 - \hnorm{\SVDr{\proj{\OO{M}}\OO{B}}}^2\\
& = \hnorm{\proj{\OO{M}}\OO{B}-\SVDr{\proj{\OO{M}}\OO{B}} }^2 - \hnorm{\proj{\OO{M}} (\OO{B} -\SVDr{\OO{B}})}^2.
\end{align*}
But since $\rank(\proj{\OO{M}} \SVDr{\OO{B}})\leq \rank(\SVDr{\OO{B}})\leq r$, using the  Eckart–Young–Mirsky theorem we have that
\begin{eqnarray*}
    \hnorm{\proj{\OO{M}} \OO{B} - \SVDr{\proj{\OO{M}}\OO{B}} }^2 &=& \min_{\widetilde{\OO{B}} \in \HSr}\hnorm{\proj{\OO{M}} \OO{B}  - \widetilde{\OO{B}} }^2 \\
    &\leq& \hnorm{\proj{\OO{M}} (\OO{B}  -\SVDr{\OO{B}})}^2,
\end{eqnarray*}
which concludes the proof.
\end{proof}

\begin{theorem}[Theorem 3.17 and Remark 3.18 of~\cite{Diouane2022}]\label{thm:aux_bound}
Given $B\in\R^{m \times n}$ such that $m\geq n$, let $B=U\Sigma V^*$ be its full SVD, $\SVDr{B} = U_r \Sigma_r V_r^*$ its $r-$truncated SVD, for some $r\in[n]$ and let $U = [U_r \; \underline{U}_r]$. Let $M\in\R^{n \times (r+s)}$, $s\geq2$, be a Gaussian matrix such that $M\sim\Nc(0,G)$, where $G\in\R^{n\times n}$ is a covariance matrix. If $G_r := U_r^* G U_r$ is nonsingular, then 
\begin{equation}\label{eq:aux_bound}
\EE \norm{[I-\proj{M}] \SVDr{B}}^2_F \leq \min\left\{\frac{r a_r}{r+a_r} \norm{B}^2,\; b_r \right\},
\end{equation}
where 
\begin{align}
a_r & := \tr(G_{\perp,r}G_r^{-2} G_{\perp,r}^* ) + \frac{\tr(\underline{G}_r - G_{\perp,r} G_r^{-1} G_{\perp,r}^*) \tr (G_r^{-1})}{s-1}, \label{eq:aux_bound_a}\\
b_r & := \tr(G_{\perp,r}G_r^{-1} \Sigma_r^2 G_r^{-1} G_{\perp,r}^* ) + \frac{\tr(\underline{G}_r - G_{\perp,r} G_r^{-1} G_{\perp,r}^*) \tr ( \Sigma_r G_r^{-1}\Sigma_r)}{s-1},  \label{eq:aux_bound_b}
\end{align}
$G_{\perp,r}:= \underline{U}_r^* G U_r$ and $\underline{G}_r:= \underline{U}_r^* G \underline{U}_r$.
\end{theorem}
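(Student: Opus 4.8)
The plan is to read~\eqref{eq:aux_bound} as a sharpened analysis of the randomized rangefinder $\range(M)$ applied to the rank-$r$ target $\SVDr{B} = U_r\Sigma_r V_r^{*}$. Since $V_r$ has orthonormal columns, $\norm{(I-\proj{M})\SVDr{B}}_F = \norm{(I-\proj{M})U_r\Sigma_r}_F$, so it suffices to bound $\EE\norm{(I-\proj{M})U_r\Sigma_r}_F^2$. First I would record the deterministic \emph{structural step}. Writing $M_1 := U_r^{*}M$ and $M_2 := \underline{U}_r^{*}M$ (so $M = U_r M_1 + \underline{U}_r M_2$), observe that $G_r$ nonsingular forces $M_1\in\R^{r\times(r+s)}$ to be Gaussian with $r+s>r$ columns and full row rank almost surely; then $N := M M_1^{\dagger}$ lies in $\range(M)$ and satisfies $U_r^{*}N = I_r$, $\underline{U}_r^{*}N = M_2 M_1^{\dagger}$, so $N = U_r + \underline{U}_r M_2 M_1^{\dagger}$. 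Applying $I-\proj{M}$ and using $\proj{M}N = N$ gives $(I-\proj{M})U_r = -(I-\proj{M})\underline{U}_r M_2 M_1^{\dagger}$, hence $\norm{(I-\proj{M})U_r\Sigma_r}_F \le \norm{M_2 M_1^{\dagger}\Sigma_r}_F$.

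For the $b_r$ estimate I would then run the standard conditional-Gaussian computation on $\norm{M_2 M_1^{\dagger}\Sigma_r}_F^2$. Conditionally on $M_1$, decompose $M_2 = G_{\perp,r}G_r^{-1}M_1 + E$ with $E$ independent of $M_1$ and columns i.i.d.\ $\Nc(0,\Delta)$, where $\Delta := \underline{G}_r - G_{\perp,r}G_r^{-1}G_{\perp,r}^{*}$ is the Schur complement. Since $M_1 M_1^{\dagger} = I_r$ this gives $M_2 M_1^{\dagger} = G_{\perp,r}G_r^{-1} + E M_1^{\dagger}$; on taking expectations the cross term vanishes ($\EE[E]=0$, $E\perp M_1$), the deterministic part contributes $\norm{G_{\perp,r}G_r^{-1}\Sigma_r}_F^2 = \tr(G_{\perp,r}G_r^{-1}\Sigma_r^2 G_r^{-1}G_{\perp,r}^{*})$, and the remaining term I would handle with $\EE[E^{*}E] = \tr(\Delta)\,I$ together with the inverse-Wishart identity $\EE[(M_1 M_1^{*})^{-1}] = G_r^{-1}/(s-1)$ — available precisely because $s\ge2$ — which yields $\EE\norm{E M_1^{\dagger}\Sigma_r}_F^2 = \tr(\Delta)\tr(\Sigma_r G_r^{-1}\Sigma_r)/(s-1)$. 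Summing the two contributions gives $\EE\norm{(I-\proj{M})\SVDr{B}}_F^2 \le b_r$.

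For the $\tfrac{r a_r}{r+a_r}\norm{B}^2$ estimate I would first drop the weight: $\norm{(I-\proj{M})U_r\Sigma_r}_F \le \norm{\Sigma_r}\,\norm{(I-\proj{M})U_r}_F = \norm{B}\,\norm{(I-\proj{M})U_r}_F$, reducing matters to $\EE\norm{(I-\proj{M})U_r}_F^2 \le \tfrac{r a_r}{r+a_r}$. The device here is the Woodbury/Schur-complement identity $U_r^{*}(I-\proj{M})U_r = (I_r+Q)^{-1}$ with $Q := M_1(M_2^{*}M_2)^{\dagger}M_1^{*}$, so that $\norm{(I-\proj{M})U_r}_F^2 = \tr((I_r+Q)^{-1}) = \sum_i (1+q_i)^{-1}$. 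Concavity of $t\mapsto t/(1+t)$ gives $\sum_i(1+q_i)^{-1} = \sum_i \tfrac{q_i^{-1}}{1+q_i^{-1}} \le \tfrac{r\,\tr(Q^{-1})}{r+\tr(Q^{-1})}$, and operator concavity of the map $A\mapsto (M_1 A^{\dagger}M_1^{*})^{-1}$ on positive semidefinite $A$ (equivalently, concavity of Schur complements) bounds $\tr(Q^{-1}) \le \tr((M_1^{\dagger})^{*}(M_2^{*}M_2)M_1^{\dagger}) = \norm{M_2 M_1^{\dagger}}_F^2$. Since $t\mapsto rt/(r+t)$ is increasing and concave, Jensen then gives $\EE\norm{(I-\proj{M})U_r}_F^2 \le \tfrac{r a_r}{r+a_r}$, where $\EE\norm{M_2 M_1^{\dagger}}_F^2 = a_r$ is the $\Sigma_r = I$ specialization of the computation of the previous paragraph. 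Taking the minimum of the two bounds completes the argument.

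The step I expect to be the main obstacle is the rigorous treatment of the degenerate configurations underlying the third paragraph: the Woodbury identity and the operator-concavity inequality presuppose $M^{*}M$ and $M_2^{*}M_2$ to have full rank, which can fail when $G$ is singular or when $r+s$ is large relative to the ambient dimension; handling these cases requires replacing the inverses by pseudoinverses and arguing by continuity/approximation, which is essentially where the delicate part of~\cite{Diouane2022} resides. By contrast, the structural bound, the conditional-Gaussian split, and the inverse-Wishart first moment are routine, and once each of the two bounds is in hand, taking their minimum is immediate.
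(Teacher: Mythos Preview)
The paper does not prove this theorem: it is quoted verbatim (in adapted notation) from~\cite{Diouane2022} and used as a black box in the proofs of Theorems~\ref{thm:main_1} and~\ref{thm:main_2}. There is therefore no ``paper's own proof'' to compare your proposal against.

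That said, your sketch is a faithful outline of the argument in the cited reference. The structural bound $\norm{(I-\proj{M})U_r\Sigma_r}_F \le \norm{M_2 M_1^{\dagger}\Sigma_r}_F$, the conditional-Gaussian split $M_2 = G_{\perp,r}G_r^{-1}M_1 + E$ with $E\sim\Nc(0,\Delta)$ independent of $M_1$, and the inverse-Wishart moment $\EE[(M_1M_1^{*})^{-1}] = G_r^{-1}/(s-1)$ are exactly the ingredients that produce $b_r$ in~\cite{Diouane2022}; likewise the Schur-complement/Woodbury representation $U_r^{*}(I-\proj{M})U_r = (I_r+Q)^{-1}$ followed by concavity and Jensen is how the $\tfrac{r a_r}{r+a_r}\norm{B}^2$ bound arises there. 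Your caveat about degenerate configurations (singular $G$, pseudoinverses) is also where the reference spends its care. So your proposal is correct and aligned with the source, but there is simply nothing in the present paper to set it against.
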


Now, collecting Lemma \ref{lm:operator-to-matrix}, Proposition \ref{prop:main} and Theorem \ref{thm:aux_bound},  we immediately obtain the following expected error bound for the randomized reduced rank estimator \ref{eq:r4_solution}.

\begin{theorem}\label{thm:main_1}
Given data $\mathcal{D}:=\{(x_i,y_i)\}_{i\in[n]}$, $p\geq1$ and $s\geq2$, let $r\leq\rank(\ECxy)$ and $\Omega\in\R^{n\times (r+s)}$ be with i.i.d. columns from $\Nc(0,\Ky)$. If $\OO{\sketch}=\ES^*\sketch$,  then, for the randomized reduced rank estimator $\EEstim_{r, \reg}^{s,p}$ given in \ref{eq:r4_solution} it holds that 
\begin{equation}\label{eq:main_1}
\EE[\ERisk(\EEstim_{r, \reg}^{s,p})]-\ERisk(\EEstim_{r, \reg}) \leq \min\left\{\frac{r a_r}{r+a_r} \sigma_1^2,\; b_r \right\},
\end{equation}
where
\begin{align}
a_r & := \frac{1}{s-1} \Bigg[\sum_{i=r+1}^{n}\Big(\frac{\sigma_i}{\sigma_{r+1}}\Big)^{4p+2} \Bigg]\Bigg[ \sum_{i=1}^{r}\Big(\frac{\sigma_{r+1}}{\sigma_i}\Big)^{4p+2}\Bigg], \label{eq:main_1_a}\\
b_r & := \frac{\sigma_{r+1}^2}{s-1} \Bigg[\sum_{i=r+1}^{n}\Big(\frac{\sigma_i}{\sigma_{r+1}}\Big)^{4p+2}\Bigg]\Bigg[  \sum_{i=1}^{r}\Big(\frac{\sigma_{r+1}}{\sigma_i}\Big)^{4p}\Bigg],  \label{eq:main_1_b}
\end{align}
and $\sigma_1\geq\cdots\geq\sigma_n$ are singular values of $\OO{B}=\ECx_\reg^{-\frac{1}{2}}\ECxy$.
\end{theorem}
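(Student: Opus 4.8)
The plan is to chain together the three ingredients already established. First I would invoke Proposition~\ref{prop:main} to bound the quantity of interest by an operator-norm quantity: since $\OO{\sketch} = \ES^*\sketch$, we have
\[
\ERisk(\EEstim_{r,\reg}^{s,p}) - \ERisk(\EEstim_{r,\reg}) \leq \hnorm{(\Idop_{\spH} - \proj{\OO{M}})\SVDr{\OO{B}}}^2,
\]
and then take expectations over $\sketch$. Next, I would use Lemma~\ref{lm:operator-to-matrix} to translate the right-hand side into a purely matrix expression: the lemma gives $\hnorm{\proj{\OO{M}}\SVDr{\OO{B}}}^2 = \norm{\proj{Y}\SVDr{B}}_F^2$ and $\hnorm{\SVDr{\OO{B}}}^2 = \norm{\SVDr{B}}_F^2$, hence $\hnorm{(\Idop_{\spH} - \proj{\OO{M}})\SVDr{\OO{B}}}^2 = \norm{(I - \proj{Y})\SVDr{B}}_F^2$ by orthogonality (noting $\proj{\OO{M}} = \proj{\OO{Y}}$ since $\OO{M} = (\OO{B}\OO{B}^*)^p\OO{Y}$ and this power operation only rescales within $\range$; more precisely one uses that $\proj{M}$ and $\proj{Y}$ coincide up to the filtering carried out in the lemma's computation). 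So it remains to bound $\EE\norm{(I - \proj{M})\SVDr{B}}_F^2$.

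The third step is to apply Theorem~\ref{thm:aux_bound} with the matrix $B = \Kx^{1/2}\Kreg^{-1/2}\Ky^{1/2}$ and sketch $M = (BB^*)^p Y$, where $Y = \Kreg^{-1/2}\Kx^{1/2}\sketch$ and $\sketch$ has i.i.d.\ columns from $\Nc(0,\Ky)$. The key computation here is to identify the covariance $G$ of the columns of $M$ and then evaluate $a_r, b_r$ from~\eqref{eq:aux_bound_a}--\eqref{eq:aux_bound_b} in terms of the singular values $\sigma_i$ of $\OO{B}$ (equivalently of $B$). Each column of $M$ is $(BB^*)^p \Kreg^{-1/2}\Kx^{1/2}\omega$ with $\omega \sim \Nc(0,\Ky)$; writing $B = U\Sigma V^*$, one checks that $B^* = V\Sigma U^*$ and $\Kx^{1/2}\Kreg^{-1/2} = $ something whose action lines up so that $M$'s column covariance becomes $G = (BB^*)^p B B^* ((BB^*)^p)^* = (BB^*)^{2p+1}$ — this is where the power $p$ enters and produces the exponent $4p+2$. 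With $G = U\Sigma^{4p+2}U^*$ diagonal in the SVD basis of $B$, the blocks simplify dramatically: $G_r = \diag(\sigma_1^{4p+2},\ldots,\sigma_r^{4p+2})$, the cross-block $G_{\perp,r} = \underline{U}_r^* G U_r = 0$ (since $G$ is diagonal in the full $U$ basis), and $\underline{G}_r = \diag(\sigma_{r+1}^{4p+2},\ldots,\sigma_n^{4p+2})$. Plugging these into~\eqref{eq:aux_bound_a}--\eqref{eq:aux_bound_b}: the first terms in $a_r$ and $b_r$ vanish, and the second terms become
\[
a_r = \frac{\big(\sum_{i>r}\sigma_i^{4p+2}\big)\big(\sum_{i\le r}\sigma_i^{-(4p+2)}\big)}{s-1}, \qquad b_r = \frac{\big(\sum_{i>r}\sigma_i^{4p+2}\big)\big(\sum_{i\le r}\sigma_i^{-4p}\big)}{s-1},
\]
which upon pulling out $\sigma_{r+1}^{4p+2}$, $\sigma_{r+1}^{-(4p+2)}$, and $\sigma_{r+1}^{-4p}$ respectively gives exactly~\eqref{eq:main_1_a}--\eqref{eq:main_1_b}. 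Finally, $\norm{B}^2 = \sigma_1^2$ gives the first term in the $\min$ of~\eqref{eq:main_1}.

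The main obstacle I anticipate is the careful bookkeeping in the third step: one must verify that $M$ really does fall under the hypotheses of Theorem~\ref{thm:aux_bound} (in particular that $M$ is Gaussian with the claimed covariance $G = (BB^*)^{2p+1}$, which requires the power iteration $(BB^*)^p$ applied to the Gaussian $Y$ to be handled correctly — note $M$ is Gaussian because it is a fixed linear image of the Gaussian $\sketch$), and that $G_r$ is nonsingular, which holds precisely because $r \le \rank(\ECxy) = \rank(B)$ so $\sigma_1,\ldots,\sigma_r > 0$. A secondary subtlety is confirming that $\proj{M} = \proj{Y}$ is \emph{not} actually needed — rather, Lemma~\ref{lm:operator-to-matrix} already delivers $\hnorm{\proj{\OO{M}}\SVDr{\OO{B}}}^2 = \norm{\proj{M}\SVDr{B}}_F^2$ directly, so the operator-to-matrix dictionary combined with Theorem~\ref{thm:aux_bound} applied with the sketch $M$ (not $Y$) closes the argument without any extra projector identity. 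Once these points are pinned down, the rest is the routine diagonal-matrix algebra sketched above.
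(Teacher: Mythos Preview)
Your proposal is correct and follows essentially the same route as the paper: invoke Proposition~\ref{prop:main}, pass to matrices via Lemma~\ref{lm:operator-to-matrix} to obtain $\ERisk(\EEstim_{r,\reg}^{s,p}) - \ERisk(\EEstim_{r,\reg}) \leq \norm{(I_n-\proj{M})\SVDr{B}}_F^2$, then apply Theorem~\ref{thm:aux_bound} after computing the covariance $G=(BB^*)^{2p+1}$, which is diagonal in the SVD basis of $B$ so that $G_{\perp,r}=0$, $G_r=\Sigma_r^{4p+2}$, $\underline{G}_r=\underline{\Sigma}_r^{4p+2}$, and the stated $a_r,b_r$ drop out. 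Your self-correction in the final paragraph is exactly right: no projector identity between $\proj{M}$ and $\proj{Y}$ is needed, since the proof of Lemma~\ref{lm:operator-to-matrix} (and the paper's use of it in~\eqref{eq:risk_operator_to_matrix}) already delivers the matrix bound with $\proj{M}$.
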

\begin{proof}
First, observe that Proposition \ref{prop:main} guarantees that almost surly 
\[
\ERisk(\EEstim_{r,\reg}^{s,p}) - \ERisk(\EEstim_{r,\reg})\leq \hnorm{(\Idop_{\spH} - \proj{\OO{M}})\SVDr{\OO{B}}}^2 = \hnorm{\SVDr{\OO{B}}}^2 - \hnorm{\proj{\OO{M}}\SVDr{\OO{B}}}^2.
\]
Hence, due to Lemma \ref{lm:operator-to-matrix}, 
\begin{equation}\label{eq:risk_operator_to_matrix}
\ERisk(\EEstim_{r,\reg}^{s,p}) - \ERisk(\EEstim_{r,\reg}) \leq \norm{\SVDr{B}}^2_F -  \norm{\proj{M} \SVDr{B}}^2_F = \norm{(I_n - \proj{M})\SVDr{B}}_{F}^2.
\end{equation} 
Next, since $L = \EE\left[\sketch\sketch^*\right]$, the covariance of the random matrix $M$ is
\begin{equation}\label{eq:gauss_covariance_1}
G = \EE \left[ MM^*\right] = (BB^*)^p \Kx^{\frac{1}{2}} \Kreg^{-\frac{1}{2}}\EE\left[\sketch\sketch^*\right] \Kreg^{-\frac{1}{2}} \Kx^{\frac{1}{2}}(BB^*)^p = (BB^*)^{2p+1}.
\end{equation}
Hence, the projected covariance becomes $G_r = (\Kx^{\frac{1}{2}} U_r)^*(BB^*)^{2p+1} (\Kx^{\frac{1}{2}} U_r)  = \Sigma_r^{4p+2}$. Since, $r\leq \rank(\ECxy^*\ECxy)=\rank(\OO{B})$, $G_r$ is nonsingular, and  \eqref{eq:main_1} holds with $a_r$ and $b_r$ computed according to \eqref{eq:aux_bound_a} and \eqref{eq:aux_bound_b}, respectively. Using that $G_{\perp,r} = 0$ and  $\underline{G}_r = \underline{\Sigma}_r^{4p+2}$, where $\underline{\Sigma}_r = {\rm diag}(\sigma_{r+1},\ldots,\sigma_n)$,  direct computation yields \eqref{eq:main_1_a} and \eqref{eq:main_1_b}.
\end{proof}

While the previous theorem provides the bound that quickly becomes zero by increasing $s$ and $p$, the main downside lies in the fact that generating anisotropic Gaussian sketch $\Omega\sim\Nc(0,\Ky)$ in certain cases can be computationally expensive. Namely, we have the following:

\begin{corollary}\label{cor:strong}
Given data $\mathcal{D}:=\{(x_i,y_i)\}_{i\in[n]}$, $p\geq1$ and $s\geq2$, let $r\leq\rank(\ECxy)$. Then, the following holds
\begin{enumerate}
\item Let $\dim(\spG)<\infty$. If $\tilde{\omega}_i\in\R^{\dim(\spG)}$, $i\in[r+s]$, are i.i.d. from $\Nc(0,I_{\dim(\spH)})$ and $\sketch = [T \tilde{\omega}_1\,\vert\,\ldots\,\vert\, T \tilde{\omega}_{r+s}]$, then primal R$^4$ Algorithm \ref{alg:r4_primal} produces $\EEstim_{r, \reg}^{s,p}$ so that \eqref{eq:main_1} holds true. 
\item Let $\dim(\spG)=\infty$, and let $N = R \Lambda R^*$ be eigenvalue decomposition of matrix $N = \ECxy\ECxy^* \in\R^{\dim(\spH)\times \dim(\spH)}$. If $\tilde{\omega}_i\in\R^{\dim(\spG)}$, $i\in[r+s]$, are i.i.d. from $\Nc(0,I_{\dim(\spH)})$ and $\sketch = [R \Lambda^{\frac{1}{2}} \tilde{\omega}_1\,\vert\,\ldots\,\vert\, R \Lambda^{\frac{1}{2}} \tilde{\omega}_{r+s}]$, then primal R$^4$ Algorithm \ref{alg:r4_primal} produces $\EEstim_{r, \reg}^{s,p}$ so that \eqref{eq:main_1} holds true. 
\item Let $\Ky = R^* R$ be Cholesky factorization. If $\tilde{\omega}_i\in\R^{n}$, $i\in[r+s]$, are i.i.d. from $\Nc(0,I_n)$ and $\sketch = [R^* \tilde{\omega}_1\,\vert\,\ldots\,\vert\, R^* \tilde{\omega}_{r+s}]$, then dual R$^4$ Algorithm \ref{alg:r4_dual} produces $\EEstim_{r, \reg}^{s,p}$ so that \eqref{eq:main_1} holds true.
\end{enumerate}
\end{corollary}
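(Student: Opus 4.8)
The plan is to reduce all three parts to Theorem~\ref{thm:main_1} via the observation that the error bound sees the random sketch only through the law of $\OO{\sketch}$. By Proposition~\ref{prop:main}, almost surely $\ERisk(\EEstim_{r,\reg}^{s,p}) - \ERisk(\EEstim_{r,\reg}) \le \hnorm{(\Idop_{\spH} - \proj{\OO{M}})\SVDr{\OO{B}}}^2$ with $\OO{B} = \ECreg^{-\frac12}\ECxy$ and $\OO{M} = (\OO{B}\OO{B}^*)^{p}\ECreg^{-\frac12}\OO{\sketch}$, and both $\OO{B}$ and the map $\OO{\sketch}\mapsto\OO{M}$ are deterministic given $\mathcal{D}$; hence the right-hand side, and its expectation, is a functional of the law of $\OO{\sketch}$ alone. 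In Theorem~\ref{thm:main_1} one takes $\OO{\sketch}=\ES^*\sketch$ with $\sketch$'s columns i.i.d.\ $\Nc(0,\Ky)$, so each column of $\OO{\sketch}$ is a centered Gaussian with covariance $\ES^*\Ky\ES = \ECxy\ECxy^*$ by \eqref{eq:TTstar}. It therefore suffices to show that each of the three constructions produces an $\OO{\sketch}$ with i.i.d.\ columns distributed as $\Nc(0,\ECxy\ECxy^*)$.

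This is a one-line push-forward in each case. In part~(1), with $\dim(\spG)<\infty$ the algorithm uses $\OO{\sketch}=\sketch$, whose $i$-th column $\ECxy\tilde{\omega}_i$ (standard normal $\tilde{\omega}_i$) has covariance $\ECxy\ECxy^*$. In part~(2), when $\dim(\spH)<\infty$ the matrix $N=\ECxy\ECxy^*=\ES^*\Ky\ES$ is computable (cf.\ \eqref{eq:TTstar} and line~1 of Algorithm~\ref{alg:r4_primal}) even if $\dim(\spG)=\infty$; with its spectral decomposition $N=R\Lambda R^*$, the $i$-th column $R\Lambda^{\frac12}\tilde{\omega}_i$ has covariance $R\Lambda R^*=N=\ECxy\ECxy^*$. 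In part~(3) the dual algorithm sets $\OO{\sketch}=\ES^*\sketch$, and with $\Ky=R^*R$ each column $R^*\tilde{\omega}_i$ is $\Nc(0,R^*R)=\Nc(0,\Ky)$, i.e.\ $\sketch$ has exactly the law hypothesized in Theorem~\ref{thm:main_1}. Column independence is inherited from that of the $\tilde{\omega}_i$ throughout.

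To conclude, part~(3) is literally the setting of Theorem~\ref{thm:main_1}, so \eqref{eq:main_1} follows at once. For parts~(1)--(2), $\OO{\sketch}$ is supplied directly as a matrix in $\R^{\dim(\spH)\times(r+s)}$ rather than through $\ES^*$, but $\spH$ is finite-dimensional there, on which a centered Gaussian is pinned down by its covariance; hence the law of $\OO{\sketch}$ coincides with that of $\ES^*\sketch$ for $\sketch$ having i.i.d.\ columns $\Nc(0,\Ky)$, and by the first paragraph the expected risk gap equals the quantity bounded in Theorem~\ref{thm:main_1}, namely \eqref{eq:main_1}. Equivalently one may re-run that proof inside $\spH$: then $\OO{M}$ is a Gaussian matrix with columnwise covariance $(\OO{B}\OO{B}^*)^{2p+1}$ exactly as in \eqref{eq:gauss_covariance_1}, and Theorem~\ref{thm:aux_bound} applied with $B=(\OO{B}\OO{B}^*)^{\frac12}$ gives $G_r=\Sigma_r^{4p+2}$, $G_{\perp,r}=0$, $\underline{G}_r=\underline{\Sigma}_r^{4p+2}$, which reproduces \eqref{eq:main_1_a}--\eqref{eq:main_1_b}.

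I expect the only delicate point to be this law-matching in the primal cases: since Theorem~\ref{thm:main_1} was derived specifically for the parametrization $\OO{\sketch}=\ES^*\sketch$ (through Lemma~\ref{lm:operator-to-matrix}), one must check that feeding the algorithm a differently-built $\OO{\sketch}$ with the same column covariance leaves the distribution of the bounded quantity unchanged. Everything else --- the three covariance identifications and the fact that the nonsingularity hypothesis of Theorem~\ref{thm:aux_bound} holds because $r\le\rank(\ECxy)=\rank(\OO{B})$ forces $G_r=\Sigma_r^{4p+2}\succ0$ --- is routine bookkeeping.
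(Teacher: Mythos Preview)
Your proposal is correct and matches the paper's intent: the paper states this corollary without proof, treating it as an immediate consequence of Theorem~\ref{thm:main_1} once one observes that each of the three sketching recipes produces columns of $\OO{\sketch}$ with covariance $\ECxy\ECxy^*$ (equivalently, in the dual case, columns of $\sketch$ with covariance $\Ky$). Your argument makes this explicit --- reducing everything to the law of $\OO{\sketch}$ via Proposition~\ref{prop:main} and the deterministic map $\OO{\sketch}\mapsto\OO{M}$, then checking the three push-forward covariance identities --- and also correctly flags that in the primal cases one may bypass Lemma~\ref{lm:operator-to-matrix} and apply Theorem~\ref{thm:aux_bound} directly in the finite-dimensional $\spH$ with $G=(\OO{B}\OO{B}^*)^{2p+1}$, which is arguably cleaner than forcing the law-matching through $\ES^*$.
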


Therefore, if we have $\dim(\spH)<<n$, we can efficiently apply primal R$^4$ algorithm with appropriate sketching to obtain strong guarantees of Theorem \ref{thm:main_1}. Otherwise, implementing the sketch in Corollary \ref{cor:strong} becomes too expensive. So, in the following result, we prove that dual R$^4$ algorithm can provably attain arbitrary small errors even using isotropic Gaussian sketch $\Omega\sim\Nc(0,\Id_n)$.

\begin{theorem}\label{thm:main_2}
Given data $\mathcal{D}:=\{(x_i,y_i)\}_{i\in[n]}$, if $r\leq\rank(\Kx\Ky)$, $p\geq1$, $s\geq2$ and $\Omega\in\R^{n\times (r+s)}$ with iid columns from $\Nc(0,\Id_n)$, than for the output $\EEstim_{r, \reg}^{s,p}$ of the dual R$^4$ Algorithm \ref{alg:r4_dual} it holds that 
\begin{equation}\label{eq:main_2}
\EE[\ERisk(\EEstim_{r, \reg}^{s,p})]-\ERisk(\EEstim_{r, \reg}) \leq \min\left\{\frac{r a_r}{r+a_r} \sigma_1^2,\; b_r \right\},
\end{equation}
where 
\begin{align}
a_r & := \frac{\norm{\Ky}}{\sigma_r^2} \Bigg[\sum_{i=r+1}^{n}\Big(\frac{\sigma_{i}}{\sigma_r}\Big)^{4p}\Bigg] \Bigg[ 1+  \frac{1}{s-1} \sum_{i=1}^{r}\Big(\frac{\sigma_r}{\sigma_{i}}\Big)^{4p+2}\Bigg], \label{eq:main_2_a}\\
b_r & := \norm{\Ky} \Bigg[\sum_{i=r+1}^{n}\Big(\frac{\sigma_{i}}{\sigma_r}\Big)^{4p}\Bigg] \Bigg[ \frac{\sigma_1^2}{\sigma_r^2}+  \frac{1}{s-1} \sum_{i=1}^{r}\Big(\frac{\sigma_r}{\sigma_{i}}\Big)^{4p}\Bigg],  \label{eq:main_2_b}
\end{align}
and $\sigma_1\geq\cdots\geq\sigma_n$ are singular values of $\OO{B}=\ECx_\reg^{-\frac{1}{2}}\ECxy$.
\end{theorem}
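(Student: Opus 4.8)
The plan is to transfer \eqref{eq:main_2} to a purely matricial randomized--SVD bound and then invoke Theorem~\ref{thm:aux_bound}, the only novelty relative to the proof of Theorem~\ref{thm:main_1} being the treatment of the \emph{isotropic} sketch.

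First I would apply Proposition~\ref{prop:main} and Lemma~\ref{lm:operator-to-matrix} (with $\OO{\sketch}=\ES^*\Omega$, as prescribed by Algorithm~\ref{alg:r4_dual}) to obtain, almost surely, $\ERisk(\EEstim_{r,\reg}^{s,p}) - \ERisk(\EEstim_{r,\reg}) \leq \norm{(\Id_n - \proj{M})\SVDr{B}}_F^2$, where $B = \Kx^{1/2}\Kreg^{-1/2}\Ky^{1/2}$, $M = (BB^*)^p Y$ and $Y = \Kreg^{-1/2}\Kx^{1/2}\Omega$ with $\Omega\in\R^{n\times(r+s)}$ having i.i.d.\ $\Nc(0,\Id_n)$ columns. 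The decisive structural observation is that $\Kx$ and $\Kreg=\Kx+\reg\Id_n$ commute, so $\Kx^{1/2}\Kreg^{-1/2}=W^{1/2}$ for the contraction $W:=\Kreg^{-1/2}\Kx\Kreg^{-1/2}=\Id_n-\reg\Kreg^{-1}$; hence $B=W^{1/2}\Ky^{1/2}$, $Y=W^{1/2}\Omega$, $M=(BB^*)^pW^{1/2}\Omega$, and $M$ has i.i.d.\ columns $\Nc(0,G)$ with per--column covariance $G = (BB^*)^p W (BB^*)^p$. This $G$ is what Theorem~\ref{thm:aux_bound} is fed.

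The crux is the operator sandwich $\norm{\Ky}^{-1}\,BB^* \preceq W \preceq \Id_n$: the right inequality is $\Kx\preceq\Kreg$, and the left follows from $BB^*=W^{1/2}\Ky W^{1/2}\preceq\norm{\Ky}\,W$. Writing the SVD $B=U\Sigma V^*$ with $U=[U_r\;\underline{U}_r]$ and setting $\widetilde W:=U^*WU$ (blocks $\widetilde W_{rr},\widetilde W_{r\perp},\widetilde W_{\perp\perp}$), the sandwich becomes $\norm{\Ky}^{-1}\Sigma^2\preceq\widetilde W\preceq\Id_n$. Since $(BB^*)^pU_r=U_r\Sigma_r^{2p}$ and $(BB^*)^p\underline{U}_r=\underline{U}_r\underline{\Sigma}_r^{2p}$, the matrices of Theorem~\ref{thm:aux_bound} are $G_r=\Sigma_r^{2p}\widetilde W_{rr}\Sigma_r^{2p}$, $G_{\perp,r}=\underline{\Sigma}_r^{2p}\widetilde W_{\perp r}\Sigma_r^{2p}$, $\underline{G}_r=\underline{\Sigma}_r^{2p}\widetilde W_{\perp\perp}\underline{\Sigma}_r^{2p}$. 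Because $r\leq\rank(\Kx\Ky)=\rank(B)$ (cf.\ Lemma~\ref{lm:operator_EVD} and Proposition~\ref{prop:operator_SVD}) the matrix $\Sigma_r$ is invertible, and $\widetilde W_{rr}\succeq\norm{\Ky}^{-1}\Sigma_r^2\succ0$, so $G_r$ is nonsingular and Theorem~\ref{thm:aux_bound} applies.

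From the sandwich I would extract: $\widetilde W_{rr}^{-1}\preceq\norm{\Ky}\,\Sigma_r^{-2}$; the Schur--complement bound $\widetilde W_{\perp r}\widetilde W_{rr}^{-1}\widetilde W_{r\perp}\preceq\widetilde W_{\perp\perp}\preceq\Id_n$, whence $\norm{\widetilde W_{rr}^{-1}\widetilde W_{r\perp}}^2\leq\norm{\widetilde W_{rr}^{-1}}\leq\norm{\Ky}/\sigma_r^2$; and $\underline{G}_r - G_{\perp,r}G_r^{-1}G_{\perp,r}^*\preceq\underline{G}_r\preceq\underline{\Sigma}_r^{4p}$. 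Using $G_r^{-1}=\Sigma_r^{-2p}\widetilde W_{rr}^{-1}\Sigma_r^{-2p}$, $p\geq1$, and repeatedly $\tr(XZ)\leq\norm{X}\tr(Z)$ for $X,Z\succeq0$, a short computation gives $\tr(G_{\perp,r}G_r^{-2}G_{\perp,r}^*)=\norm{\Sigma_r^{-2p}\widetilde W_{rr}^{-1}\widetilde W_{r\perp}\underline{\Sigma}_r^{2p}}_F^2\leq\norm{\Ky}\sigma_r^{-4p-2}\sum_{i>r}\sigma_i^{4p}$, $\tr(G_r^{-1})\leq\norm{\Ky}\sum_{i\leq r}\sigma_i^{-4p-2}$, $\tr(\Sigma_r G_r^{-1}\Sigma_r)\leq\norm{\Ky}\sum_{i\leq r}\sigma_i^{-4p}$, $\tr(\underline{G}_r - G_{\perp,r}G_r^{-1}G_{\perp,r}^*)\leq\sum_{i>r}\sigma_i^{4p}$, and $\tr(G_{\perp,r}G_r^{-1}\Sigma_r^2 G_r^{-1}G_{\perp,r}^*)\leq\sigma_1^2\,\tr(G_{\perp,r}G_r^{-2}G_{\perp,r}^*)$. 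Substituting into \eqref{eq:aux_bound_a}--\eqref{eq:aux_bound_b} and factoring out $\norm{\Ky}\sum_{i>r}(\sigma_i/\sigma_r)^{4p}$ produces exactly \eqref{eq:main_2_a}--\eqref{eq:main_2_b}. Since $x\mapsto rx/(r+x)$ is increasing and $\norm{B}^2=\sigma_1^2$, passing these bounds through the $\min$ in \eqref{eq:aux_bound} and taking expectations over $\Omega$ yields \eqref{eq:main_2}.

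The main obstacle is the third paragraph. In Theorem~\ref{thm:main_1} the anisotropic sketch $\Omega\sim\Nc(0,\Ky)$ makes $G=(BB^*)^{2p+1}$ commute with $BB^*$, forcing $G_{\perp,r}=0$ and trivializing everything; here $G=(BB^*)^pW(BB^*)^p$ does \emph{not} commute with $BB^*$ and $G_{\perp,r}\neq0$. The whole estimate hinges on spotting the factorization $B=W^{1/2}\Ky^{1/2}$ together with the sandwich $\norm{\Ky}^{-1}BB^*\preceq W\preceq\Id_n$, which is precisely what lets one express $G_r,G_{\perp,r},\underline{G}_r$ in a form controllable solely by $\Sigma$ and $\norm{\Ky}$; after that, the off--diagonal block estimate via the Schur complement of $\widetilde W_{rr}$ is the only place that requires care, the remaining trace manipulations being routine.
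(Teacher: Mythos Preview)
Your proposal is correct and follows essentially the same route as the paper: reduce to the matrix bound via Proposition~\ref{prop:main} and Lemma~\ref{lm:operator-to-matrix}, identify the sketch covariance $G=(BB^*)^pW(BB^*)^p$ with $W=\Kx\Kreg^{-1}$, use the sandwich $\norm{\Ky}^{-1}BB^*\preceq W\preceq\Id_n$ (equivalently the paper's $\norm{\Ky}^{-1}(BB^*)^{2p+1}\preceq G\preceq(BB^*)^{2p}$), and control the off--diagonal block via a Schur complement before plugging into Theorem~\ref{thm:aux_bound}. The only cosmetic difference is that you introduce the intermediate blocks $\widetilde W_{rr},\widetilde W_{r\perp},\widetilde W_{\perp\perp}$ and take the Schur complement at the level of $\widetilde W$, whereas the paper takes it directly at the level of $G$; the resulting trace bounds are identical.
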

\begin{proof}
Noting that $\rank(\Kx\Ky) = \rank(\OO{B})$, the only difference from the proof of Theorem \ref{thm:main_1} is in the derivation of the bound from \eqref{eq:risk_operator_to_matrix}. The difficulty in this case lies in the fact that since 
\begin{equation}\label{eq:gauss_covariance_2}
G = \EE \left[ MM^*\right] = (BB^*)^p \Kx \Kreg (BB^*)^p,
\end{equation}
in general, the projected covariance $G_{\perp,r} \neq 0$. Hence, it is not evident that the first terms in \eqref{eq:aux_bound_a} and \eqref{eq:aux_bound_b} can be made arbitrarily small. To address this, observe that $\Id_n\succeq \Id_n - \reg\Kreg^{-1} = \Kx\Kreg^{-1}$ and $\Ky\preceq \norm{\Ky}I_n$ implies 
\begin{equation}\label{eq:covarinace_bounds}
\norm{L}^{-1} (BB^*)^{2p+1} \preceq G \preceq (BB^*)^{2p}.
\end{equation}
Moreover, the projected covariance is positive definite, since 
\begin{eqnarray}
G_r = (\Kx^{\frac{1}{2}} U_r)^*G (\Kx^{\frac{1}{2}} U_r) \succeq \norm{\Ky}^{-1}(\Kx^{\frac{1}{2}} U_r)^*(BB^*)^{2p+1} (\Kx^{\frac{1}{2}} U_r)\succeq \norm{\Ky}^{-1} \Sigma_r^{4p+2} \succ 0,
\end{eqnarray}
which implies that its Schur complement w.r.t. to the first $r$ indices $ \underline{G}_r - G_{\perp,r} G_r^{-1} G_{\perp,r}^*$ is positive semi-definite. Hence, we obtain $G_{\perp,r} G_r^{-1} G_{\perp,r}^*\preceq \underline{G}_r$. However, projecting \eqref{eq:covarinace_bounds} yields  $\underline{G}_r \preceq \underline{\Sigma}_r^{4p}$, which implies that  $G_r^{-1} \preceq \norm{\Ky} \Sigma_r^{-4p-2}$ and 
\begin{equation}\label{eq:covarinace_bounds_schur}
G_{\perp,r} G_r^{-2} G_{\perp,r}^* \preceq \norm{G_r^{-1}} G_{\perp,r} G_r^{-1} G_{\perp,r}^*  \preceq \norm{\Ky} \norm{\Sigma_r^{-4p-2}}  \underline{G}_r \preceq \frac{\norm{\Ky}}{\sigma_r^{4p+2}} \underline{\Sigma}_r^{4p}.
\end{equation}
Taking the trace we conclude that 
\begin{equation}\label{eq:a_first_term}
\tr\left( G_{\perp,r} G_r^{-2} G_{\perp,r}^* \right) \leq \frac{\norm{\Ky}}{\sigma_r^2}\sum_{i=r+1}^{n} \Big(\frac{\sigma_i}{\sigma_r}\Big)^{4p}
\end{equation}
In a similar manner, we obtain 
\begin{equation}\label{eq:b_first_term}
\tr\left( G_{\perp,r} G_r^{-1} \Sigma_r^2 G_r^{-1}G_{\perp,r}^* \right) \leq \norm{\Ky} \frac{\sigma_1^2}{\sigma_r^2}\sum_{i=r+1}^{n} \Big(\frac{\sigma_i}{\sigma_r}\Big)^{4p},
\end{equation}
\begin{equation}\label{eq:second_terms}
\tr\left( G_r^{-1} \right) \leq \norm{\Ky} \sum_{i=1}^{r} \frac{1}{\sigma_i^{4p+2}}\quad\text{ and }\quad \tr\left( \Sigma_r G_r^{-1} \Sigma_r \right) \leq \norm{\Ky} \sum_{i=1}^{r} \frac{1}{\sigma_i^{4p}},
\end{equation}
and \eqref{eq:main_2_a} and \eqref{eq:main_2_b} follow using that $\tr\big( \underline{G}_r - G_{\perp,r} G_r^{-1} G_{\perp,r}^* \big) \leq \tr\left( \underline{\Sigma}_r^{4p} \right) = \sum_{i=r+1}^{n}\sigma_i^{4p}$.  
\end{proof}

As a final remark of this section note that $\sigma_{r+1}<\sigma_r$ suffices to assure that $a_r$ and $b_r$ from \eqref{eq:main_2_a} and \eqref{eq:main_2_b}, respectively, converge to zero with the linear rate  $(\sigma_{r+1} / \sigma_r)^4$  as $p$ grows.

\section{Numerical experiments} \label{sec:experiments}
The numerical evaluation of our randomized algorithms is comprised of three experiments: a simple multivariate linear regression to benchmark the bounds of Theorems~\ref{thm:main_1} and \ref{thm:main_2}, a large-scale vector-valued regression problem from computational neuroscience and a Koopman operator regression for a one-dimensional dynamical system, i.e., the noisy logistic map. In these large-scale problems, we are mainly interested in probing the enhanced performance of {\em randomized} reduced rank regression.

\subsection*{Linear  system}
We define a multivariate linear regression problem $Y = AX + \xi$ with $X = \{ x_1, \ldots, x_n \}$,  $x_{i} \in \R^{d}$, $x_i \sim \mathcal{N}(0, \Id)$, $A\in \R^{d\times d}$ and $\xi \sim \mathcal{N}(0, \Diag(\texttt{std}))$ standard Gaussian random vectors. We designed a low-rank matrix $A = U\Sigma U^{\top}$ with $U$ being a random orthogonal matrix $\in \R^{d\times d}$ and 
$\Sigma :=\Diag(\sigma_1, \ldots, \sigma_d) \in \R^{d\times d}$. The singular values are defined as
$\sigma_i := 1/(1+exp(-z_i))$, where $z = r-(1, \ldots, d)/\tau$, with $r$ being the number of singular values close to 1 and $\tau$ being the decay length representing the number of singular values lying in the slope part of the sigmoid curve approaching 0.

For our numerical experiments, we set $d = 100$, $r=10$, $\tau=5$, $\texttt{std}=0.1$,  the number of training and test points $n = 1000$ and let $\fG(x) = \fH(x) = x$ for all $x \in \X$. We trained reduced rank estimators~\eqref{eq:RRR_solution} both with the method as implemented in~\cite{Kostic2022}, based on Arnoldi iterations (R$^3$), and the method described in the present work, based on randomized SVD techniques (R$^4$). Each estimator was trained by fixing the regularization parameter $\reg = 10^{-6}$ and the number of powering iterations $p=1$. 

{\small
\begin{figure}[t!]
\begin{center}
\includegraphics[scale=1]{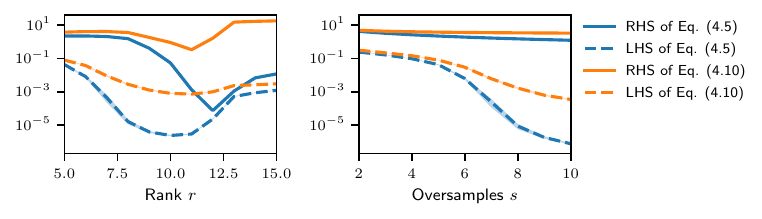}
    \caption{Comparison between the upper bounds (solid lines) derived in Theorems~\ref{thm:main_1} and \ref{thm:main_2}  and the estimated error (dashed lines) as a function of rank (left, $s=5$) and of oversampling parameter $s$ (right, $r=5$). The estimated error is represented by the average ($\pm$ 95\% confidence interval) over 1000 independent seed initializations.}\label{fig:noisylinear}
    \end{center}
\end{figure}
}

In Fig.~\ref{fig:noisylinear} we compare the actual value of $\EE[\ERisk(\EEstim_{r, \reg}^{s,p})]-\ERisk(\EEstim_{r, \reg})$ against the upper bounds derived in Theorems~\ref{thm:main_1} and \ref{thm:main_2} and we can appreciate how the empirical quantities are well below the theoretical bounds. The left panel shows the comparison conducted by varying the rank while keeping the oversampling parameter $s$ fixed at 5. Conversely, the right panel shows the influence of the oversampling parameter $s$ with rank set to 5.

Moreover, in Fig. \ref{fig:noisylinear_timings} we compare risk and fit time of R$^3$ and R$^4$ algorithms as a function of the training sample size $n$. The estimators were trained with the parameters $r=15$, $\reg = 10^{-6}$ and, concerning R$^4$, $p=1$ and $s=20$. We observe that the risk difference between R$^4$ and R$^3$ estimators on training data decreases as $n$ increases and for test data is virtually 0 (left and center panels, respectively). This empirically suggests that the inaccuracies introduced by the randomized procedure do not affect the generalization performances of the estimator. Further, our na\^ive Python implementation attains, on average, a $\sim 8.6\times$ faster computation speed compared w.r.t. the highly optimized Arnoldi iterations implemented in the \texttt{SciPy}~\cite{Scipy2020} package (right panel).

{\small
\begin{figure}[t!]
\centering
\includegraphics[scale=1]
{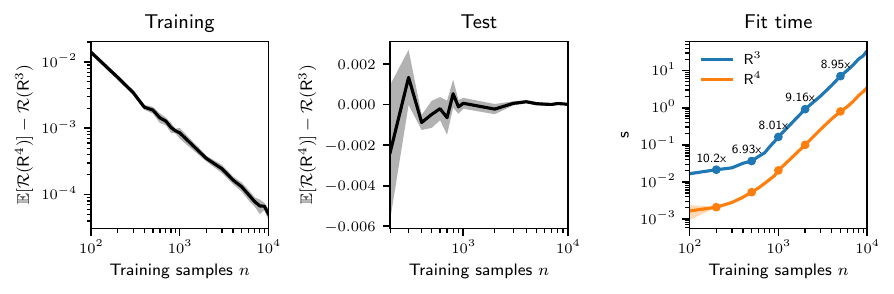}
    \caption{Risk difference evaluated on the training (left) and test (center) sets, and comparison of fit time between R$^4$ and R$^3$ (right). The estimated quantities are represented by the average over 10 independent seed initializations.} \label{fig:noisylinear_timings}
\end{figure}
}

\subsection*{Large scale neuroscience dataset} 
To further prove the effectiveness of R$^4$, we challenge it on a large scale neuroscience task: the Natural Scenes Dataset (NSD) \cite{Allen2021, Gifford2023}. This dataset comprises high-resolution fMRI recordings obtained from 8 healthy subjects, each exposed to a set of 10000 images. Among these images, drawn from the COCO dataset \cite{Lin2014}, 9000 were unique stimuli and 1000 were shared across subjects. This provided us with a total of 73000 unique recording/stimulus pairs. The goal, here, is to assess the ability of our algorithm to predict the neural activity evoked by intricate natural visual stimuli.

We face this problem by learning separate reduced rank estimators $\EEstim_{r, \reg}$ for each subject and hemisphere. Each estimator acts as a mapping between the two high-dimensional input (images) space, where each image was described by several features (i.e., image embedding), and the output (fMRI) space, described by 19004 and 20544 vertices, respectively, for left and right hemisphere (except for subjects 6 and 8 which have 20220 and 20530 vertices respectively for the right hemisphere). To obtain a compact representation of the visual stimuli, we derive image embeddings from the last convolutional layer (\texttt{features.12}) of the well-known AlexNet model \cite{Krizhevsky2017} (9216-dimensional feature vector), and from the recent computer vision foundation model DINOv2 (ViT-g) \cite{Oquab2023}. Concerning DINOv2, we follow the approach outlined in \cite{Caron2021}, which consists in concatenating the output of the \texttt{[CLS]} token and the GeM-pooled \cite{Radenovic2019} output patch tokens from the last 20 blocks. This yields a 61440-dimensional feature vector for DINOv2, which, when combined with the one obtained from AlexNet, produces a 70565-dimensional image embedding.

In this experiment, we trained the estimators R$^3$ and R$^4$ with two different kernel combinations: (i) a linear kernel applied to both input and output spaces, and (ii) a Mat\'{e}rn kernel (with smoothness parameter $\nu=0.5$) for the input space and a linear kernel for the output space. The estimators' performance was evaluated and compared in terms of mean Pearson's correlation ($r$), mean squared correlation $r^2$, and mean noise-normalized squared correlation (NNSC, i.e., the challenge's metric described in \cite{Gifford2023}), computed over all the vertices from all subjects.

The hyperparameters of each estimator were tuned through a grid-search procedure using an independent validation set. The tuned hyperparameters were:
\vspace{.12truecm}
\begin{itemize}
    \item the regularization parameter $\gamma \in [10^{-5}, 10^{-2}]$,
    \item the length-scale of Mat\'{e}rn kernel, selected from a set of quantiles (0.05, 0.25, 0.5, 0.75, 0.95) derived from the pairwise distances among 1000 randomly selected image embeddings within the training set.
\end{itemize}
\vspace{.12truecm}
\begin{table}[t!]
\begin{center}
\footnotesize
\caption{Comparison between baseline, R$^3$ and R$^4$ models in terms of Pearson's correlation ($r$), squared correlation ($r^2$) and noise-normalized squared correlation (NNSC). Models are tested with private and challenge test sets. For R$^4$ models, hyperparameter optimization (HPO) is based on results obtained from a validation set of either R$^3$ (HPO: R$^3$) or R$^4$ (HPO: R$^4$). Best results are in bold.}
\label{tab:neuro}
\begin{tabular}{cccc|ccc|c}
\multicolumn{1}{c}{} & \multicolumn{1}{l}{} & \multicolumn{1}{l}{} & \multicolumn{1}{c}{}  & \multicolumn{3}{|c|}{Private test set} & Challenge test set \\
\hline
\multicolumn{1}{c}{Model} & $K_X$ & $K_Y$ & HPO & \multicolumn{1}{|c}{$r$} & $r^2$ & NNSC & \multicolumn{1}{|c}{NNSC} \\
\hline
\multicolumn{1}{c}{Baseline} & / & / & / & \multicolumn{1}{c}{0.3923} & 0.1816 & 55.172 & \multicolumn{1}{c}{47.639} \\[0.2cm]

\multicolumn{1}{c}{R$^3$} & Linear & Linear & R$^3$ & \multicolumn{1}{c}{0.4105} & 0.1993& 59.537& \multicolumn{1}{c}{51.749} \\
\multicolumn{1}{c}{R$^4$} & Linear & Linear & R$^3$ &  \multicolumn{1}{c}{0.4101} & 0.1988& 59.443& \multicolumn{1}{c}{51.637} \\
\multicolumn{1}{c}{R$^4$} & Linear & Linear & R$^4$ & \multicolumn{1}{c}{0.4101} & 0.1989& 59.424& \multicolumn{1}{c}{51.634} \\[0.2cm]
\multicolumn{1}{c}{R$^3$} & Mat\'{e}rn & Linear & R$^3$ & \multicolumn{1}{c}{\textbf{0.4147}} & \textbf{0.2015} & \textbf{61.182} & \multicolumn{1}{c}{\textbf{52.951}} \\
\multicolumn{1}{c}{R$^4$} & Mat\'{e}rn & Linear & R$^3$ & \multicolumn{1}{c}{0.4142} & 0.2010 & 61.024 & \multicolumn{1}{c}{52.828} \\
\multicolumn{1}{c}{R$^4$} & Mat\'{e}rn & Linear & R$^4$ & \multicolumn{1}{c}{0.4142} & 0.2011 & 61.023 & \multicolumn{1}{c}{52.843} \\
\end{tabular}
\end{center}
\end{table}

In Table \ref{tab:neuro}, we show the results of the reduced rank estimators and a baseline model. As in \cite{Gifford2023}, the baseline model consists of (i) a dimensionality reduction phase of the image embeddings to 100 dimensions through PCA, and (ii) learning a linear regression model between the reduced input and each single vertex. Notably, the learned reduced rank estimators effectively capture a (low-dimensional) mapping that links visual stimuli to the full fMRI activity, whereas the baseline model, following a univariate approach, neglects any relationship among vertices. This is reflected in the results where our (simple) approach outperforms the baseline model and, in particular, R$^3$ model with Mat\'{e}rn kernel applied on the input space achieves the best performance, ranking 21st out of 94 (at the moment of submission) in the post-challenge leaderboard (see Table \ref{tab:neuro}). Moreover, the table reveals that for the same kernel choice, R$^3$ performs only slightly better than R$^4$ but at a significantly higher computational cost (see Fig. \ref{fig:neuroscience1}). Given the high dimensionality of the problem, both in terms of sample size and number of features, we implemented R$^3$ and R$^4$ algorithms using \texttt{PyTorch}~\cite{Pytorch2019} package to exploit GPU acceleration. This translates into a dramatic speedup of about three orders of magnitude (see Fig. \ref{fig:neuroscience1}, right panel).

{\small
\begin{figure}[t!]
\centering
\includegraphics[scale=1.0]{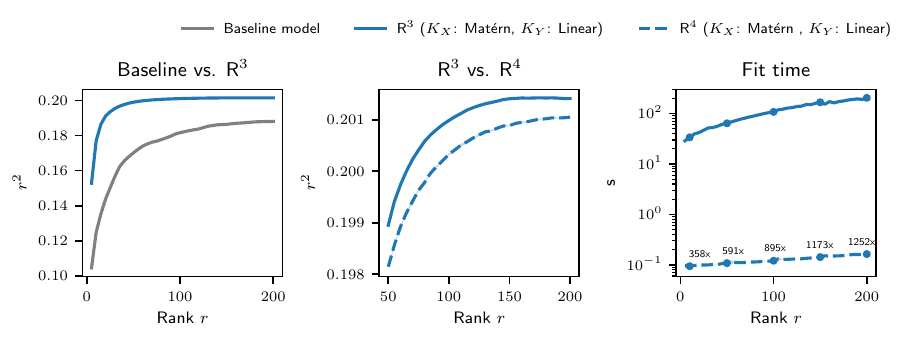}
    \caption{Assessment of the performance in predicting fMRI activity in terms of squared correlation, $r^2$ (the higher the better), and fit time. We show $r^2$ for baseline and R$^3$ (left), R$^3$ and R$^4$ (center), and the fit time for R$^3$ and R$^4$ (right). %
    The estimated quantities are represented by the average over 10 independent seed initializations.}\label{fig:neuroscience1}
\end{figure}
}

\subsection*{Noisy logistic map}
In the context of dynamical systems, our methodology can be employed to learn a low-rank approximation of the Koopman operator~\cite{Kostic2022} associated to a dynamical system. We demonstrate how R$^4$ can achieve comparable performance to R$^3$ in approximating the Koopman operator from data, with a specific focus on eigenvalue estimation. We analyze the noisy logistic map, defined as $x_{t=1}=(4x_t(1-x_t)+\xi_t) \; \text{mod 1}$ where $\X=[0,1]$ and  $\xi_t$ is an i.i.d. trigonometric noise \cite{Ostruszka2000}. For the numerical experiment, we generate trajectories of 5000 and 10000 points as validation and test sets, respectively, with parameter $N$ of the trigonometric noise set to 20. We set the target rank $r=3$, the oversampling parameter $s=20$, number of powering iterations $p=1$ and regularization $\gamma=10^{-7}$.

In Fig. \ref{fig:logisticmap} (left panel), we show the three leading eigenvalues $\lambda = \{1, -0.193\pm0.191i\}$ characterizing the Koopman operator of the noisy logistic map, and the eigenvalues estimated from R$^3$ and R$^4$. To quantitatively assess whether the eigenvalues of the logistic map are well estimated, we used the \textit{directed} Hausdorff distance (DHD) defined as $h(P, R) = \max_{p \in P} \min_{r \in R} |p - r|$, measuring the distance between a set $P$ and a reference set $R$. We show that the DHD between the estimated eigenvalues with R$^3$ and the true eigenvalues tend to zero as the training sample size increases (center panel). Remarkably, the estimated eigenvalues with R$^4$ are virtually the same as the ones estimated with R$^3$ (right panel).

{\small
\begin{figure}[t!]
\centering
\includegraphics[scale=1.0]{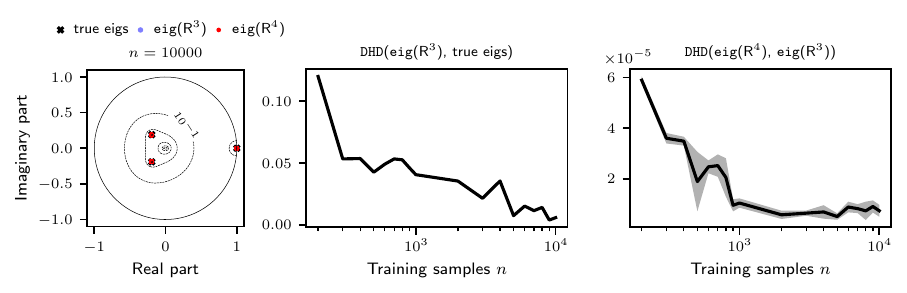}
    \caption{True and estimated eigenvalues of the logistic map (left), DHD between R$^3$ estimated eigenvalues and true eigenvalues (center), and DHD between R$^3$ and R$^4$ estimated eigenvalues.}\label{fig:logisticmap}
\end{figure}
}

\section{Conclusions}
We devised an efficient method to solve vector-valued regression problems by learning reduced rank estimators via a randomized algorithm. The numerical experiments confirmed the theoretical findings showing how R$^4$ virtually attains the same accuracy of classical reduced rank regression being dramatically faster when dealing with real-world, large scale datasets. 

An important open question for future research is how to combine the results of this work with statistical learning rates. This would unveil to which extent the computational complexity of reduced rank regression can be reduced without deterioration of its generalization properties. Other interesting paths include combining our results with different  strategies, such as Nystr\"{o}m subsampling~\cite{Rudi2015} or random features~\cite{Rahimi2007}, to further reduce the computational complexity of estimators, as well as addressing other vector-valued regression problems beyond the square loss.

\bibliographystyle{siamplain}
\bibliography{references}

\end{document}